\def\eqref#1{equation~\ref{#1}}
\def\1{\bm{1}}
\def\vw{{\bm{w}}}
\def\vx{{\bm{x}}}
\DeclareMathAlphabet{\mathsfit}{\encodingdefault}{\sfdefault}{m}{sl}
\SetMathAlphabet{\mathsfit}{bold}{\encodingdefault}{\sfdefault}{bx}{n}
\DeclareMathOperator*{\argmin}{arg\,min}
\DeclareMathOperator{\sign}{sign}
\newtheorem{theorem}{Theorem}[section]
\newtheorem{lemma}[theorem]{Lemma}
\newcommand{\Ic}{\mathcal{I}}
\newcommand{\Nc}{\mathcal{N}}
\newcommand{\Rc}{\mathcal{R}}
\newcommand{\Rb}{\mathbb{R}}
\author{%
  Luke Rowe$^*$\qquad Benjamin Th\'erien\thanks{Equal contribution. Firth-authorship determined by a coinflip.} \qquad Krzysztof Czarnecki \qquad Hongyang Zhang\\
   School of Computer Science \\
   University of Waterloo \\
   \texttt{\{l6rowe,btherien,k2czarne,hongyang.zhang\}@uwaterloo.ca} \\
}
\title{A Closer Look at Robustness to L-infinity and Spatial Perturbations and their Composition}
\begin{document}

\maketitle

\begin{abstract}

In adversarial machine learning, the popular $\ell_\infty$ threat model has been the focus of much previous work. While this mathematical definition of imperceptibility successfully captures an infinite set of additive image transformations that a model should be robust to, this is only a subset of all transformations which leave the semantic label of an image unchanged. Indeed, previous work also considered robustness to spatial attacks as well as other semantic transformations; however, designing defense methods against the composition of spatial and $\ell_{\infty}$ perturbations remains relatively underexplored. In the following, we improve the understanding of this seldom investigated compositional setting. We prove theoretically that no linear classifier can achieve more than trivial accuracy against a composite adversary in a simple statistical setting, illustrating its difficulty. We then investigate how state-of-the-art $\ell_{\infty}$ defenses can be adapted to this novel threat model and study their performance against compositional attacks. We find that our newly proposed TRADES$_{\text{All}}$ strategy performs the strongest of all. Analyzing its logit's Lipschitz constant for RT transformations of different sizes, we find that TRADES$_{\text{All}}$ remains stable over a wide range of RT transformations with and without $\ell_\infty$ perturbations.

\end{abstract}

\section{Introduction}

Despite the outstanding performance of deep neural networks\cite{yu2022coca,chen2022segmentation,zhang2022dino} on a variety of computer vision tasks, deep neural networks have been shown to be vulnerable to human-imperceptible adversarial perturbations \cite{szegedy2014intriguing, goodfellow2015steganography}. Designing algorithms that are robust to small human-imperceptible $\ell_\infty$-bounded alterations of the input has been an extensive focus of previous work \cite{madry2018towards, zhang2019trades}. While it is certainly unreasonable for a classifier to change its decision based on the addition of imperceptible $\ell_\infty$-bounded noise, this is not the only input transformation we wish to be robust to. Many spatial transformations, such as bounded rotation/translations (RTs), leave an image's label unchanged, but are ill-defined by an $\ell_\infty$-threat model (see Figure~\ref{fig:compositional-adv-example}). Yet, any classifier deemed robust should not be any more vulnerable to perturbations applied to RT transformed images seen in Figure~\ref{fig:compositional-adv-example} (row 3) than to natural images (row 1). However, current defenses, designed for $\ell_\infty$ robustness fail under this compositional setting (see Table~\ref{table:cifar10}), suggesting that our models, at least for image classification, are less robust than we thought. To build truly robust models, we must design training protocols to account for such situations.

While many prior works have considered robustness under adversarial settings that differ from the standard $\ell_\infty$ setting, most works either consider robustness under a single perturbation type \cite{engstrom2017suffice, xiao2018spatial, kanbak2018geometricrobustness, balunovic2019certifygeometric, engstrom2019exploring, li2021tss} or by selecting a perturbation from a fixed set (\textit{i.e.,} the union) of perturbation types \cite{ tramer2019multiple, maini2020union,li2021tss, laidlaw2021perceptual, maini2022categorization}. However, relatively few consider robustness to the composition of multiple perturbation types \cite{tramer2019multiple, li2021tss, tsai2022compositional}. Realistically, an adversary is not restricted to selecting a perturbation from one threat model but may choose to compose perturbations from multiple threat models (see Figure~\ref{fig:compositional-adv-example}). Moreover, our theoretical analysis shows that defending against an adversary who can compose $\epsilon$-bounded $\ell_{\infty}$ perturbations and RT transformations is challenging even in a simple statistical setting. This theoretical result highlights the need to explore how we can build truly robust models in this well-motivated compositional setting. The main contributions of this work are three-fold:

\begin{figure}[t]
    \centering
    \includegraphics[width=0.6\linewidth]{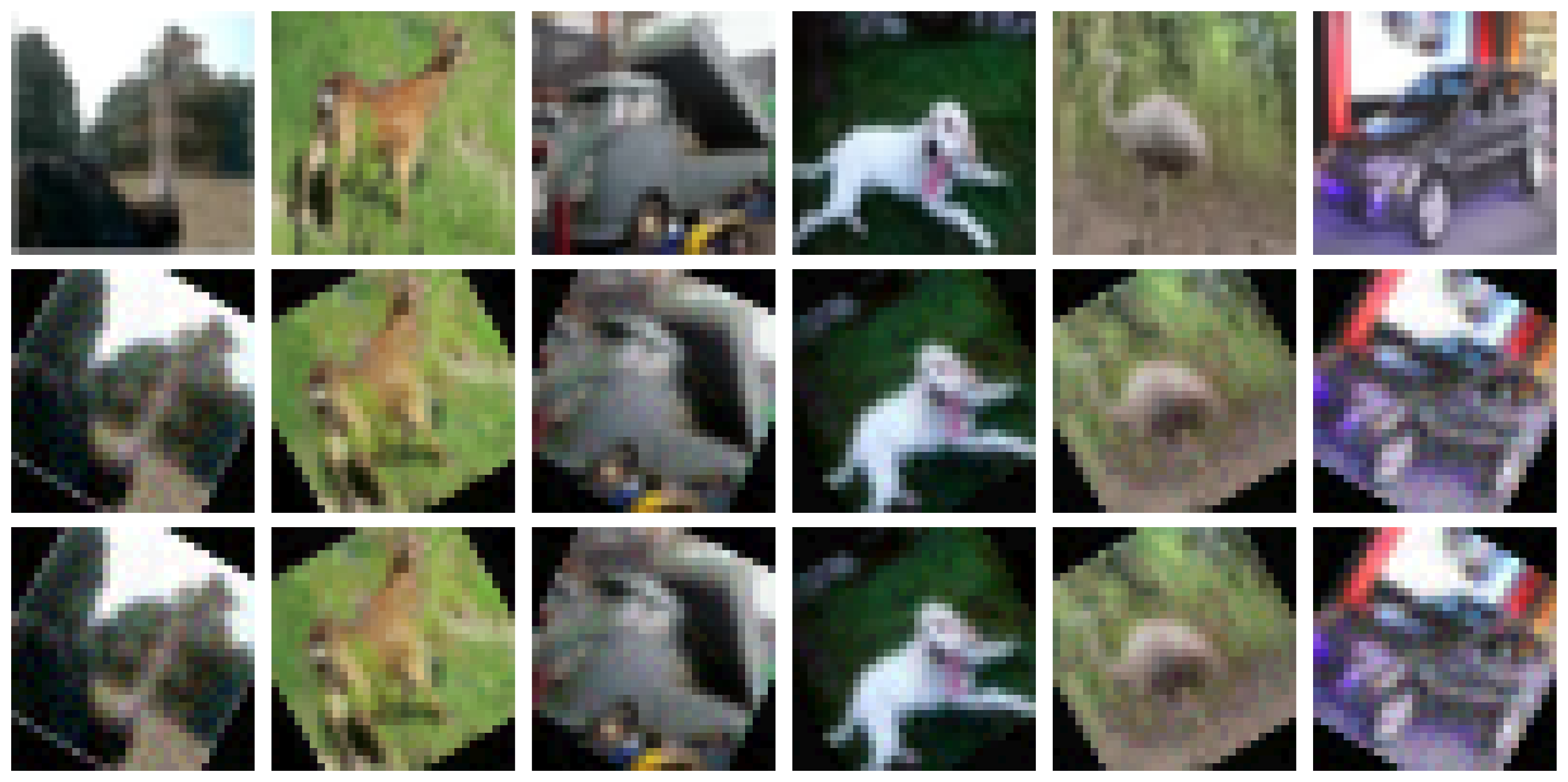}
    \caption{\textbf{Adversarial images obtained by the AAA $\circ$ RT attack}. The first row shows clean images. The second row shows the same images with applied adversarial RT transformations, and the third row shows the same RT images as above, but with imperceptible $\ell_\infty$ perturbations applied via AAA. }
    \label{fig:compositional-adv-example}
\end{figure}

\begin{itemize}
\item We show theoretically that no linear classifier can attain non-trivial compositional robustness in a simple, yet realistic, statistical setting. 
\item We train a family of empirical defenses constructed from TRADES \cite{zhang2019trades} and analyze their performance under a compositional adversary. 
\item We propose $\text{TRADES}_{\text{All}}$, a new training protocol for defending against $\ell_\infty \circ \text{RT}$ adversaries, show that it attains the best performance of all the defenses trained, and discover that its logits are more stable than our other robust models, shedding light on its strong performance.
\end{itemize}

\section{Related work}
\label{sec:relatedwork}

Many existing works consider adversarial robustness to single perturbation types, which include robustness to $\ell_{p}$ perturbations \cite{szegedy2014intriguing, goodfellow2015steganography, madry2018towards, zhang2019trades, zhang2022causal} as well as robustness to spatial transformations of the input \cite{xiao2018spatial, balunovic2019certifygeometric, engstrom2017suffice, li2021tss, engstrom2019exploring, kanbak2018geometricrobustness}. Compared with single perturbation type robustness, relatively few consider the problem of attaining robustness to the composition of multiple perturbation types \cite{tramer2019multiple, li2021tss, tsai2022compositional, mao2021compositeadversarialattacks}. Tram{\`{e}}r and Boneh~\cite{tramer2019multiple} first identified the compositional setting and studied the composition of multiple $\ell_{p}$ perturbations as well as $\ell_{\infty}$ and RT perturbations; however, they consider affine combinations of multiple perturbations, which unreasonably constrains the power of the compositional adversary. Li et al. \cite{li2021tss} designed methods to attain certified robustness to the composition of various semantic transformations of the input, and Tsai et al. \cite{tsai2022compositional} designed a generalized form of adversarial training for compositional semantic perturbations. Mao et al. \cite{mao2021compositeadversarialattacks} designed a composite adversarial attack that composes the search space of multiple base attackers. Our work is most closely related to \cite{tramer2019multiple}; however, we differ from \cite{tramer2019multiple} by considering the addition of $\ell_{\infty}$ perturbations and RT transformations rather than an affine combination of such perturbations in our analysis, so as to not unreasonably limit the strength of the compositional adversary. Figure \ref{fig:compositional-adv-example} motivates this treatment, as the second and third row of images are indistinguishable to humans.

\section{Preliminaries}
\label{sec:preliminaries}

In this work, we consider a compositional threat model consisting of the composition of $\epsilon$-bounded $\ell_{\infty}$ perturbations and bounded RT transformations. For the $\ell_{\infty}$ threat model, we consider an adversary who can perturb an image $\vx$ with $\epsilon$-bounded $\ell_{\infty}$ noise. That is, the adversarial reachable region $\mathcal{A}^{\ell_{\infty}}(\vx)$ under the $\ell_{\infty}$-threat model is defined by:
\begin{align}
    \mathcal{A}^{\ell_{\infty}}(\vx) = \mathbb{B}_{\infty}(\vx, \epsilon) := \{ \vx + \boldsymbol{\Delta} ; || \boldsymbol{\Delta} ||_{\infty} \leq \epsilon \}.
\end{align}
For the RT threat model, we consider an adversary who can apply a bounded rotation $\theta$ followed by bounded horizontal and vertical translations $\delta_x, \delta_y$ to $\vx$. Concretely, the adversarial reachable region under the RT-threat model is defined by:
\begin{align}
    \mathcal{A}^{\text{RT}}(\vx) = \{\mathcal{T}(\vx; \theta, \delta_x, \delta_y) ; |\theta| \leq \theta^{\text{max}}, |\delta_x| \leq \delta_x^{\text{max}}, |\delta_y| \leq \delta_y^{\text{max}} \},
\end{align}
where $\mathcal{T}(\cdot \text{ }; \theta, \delta_x, \delta_y)$ is the affine transformation function with rotation $\theta$ and horizontal/vertical translations $\delta_x, \delta_y$, which implicitly warps the image via an interpolation algorithm (our experiments utilize bilinear interpolation). For the compositional threat model, the adversarial reachable region is naturally defined by:
\begin{align}
   \mathcal{A}^{\ell_{\infty} \circ \text{ RT}}(\vx) = \{\mathbb{B}_{\infty} (\mathcal{T}(\vx; \theta, \delta_x, \delta_y), \epsilon) ; |\theta| \leq \theta^{\text{max}}, |\delta_x| \leq \delta_x^{\text{max}}, |\delta_y| \leq \delta_y^{\text{max}}  \}. 
\end{align}
That is, $\mathcal{A}^{\ell_{\infty} \circ \text{ RT}}(\vx)$ is defined as as the set of $\epsilon$-bounded $\ell_{\infty}$ balls around all valid affine transformations of the image $\vx$. To compare with our compositional setting, we also consider the union threat model consisting of the union of $\epsilon$-bounded $\ell_{\infty}$ perturbations and bounded RT transformations \cite{tramer2019multiple}. In this case, the adversarial reachable region under the $\ell_{\infty} \cup \text{ RT}$-threat model is defined as $\mathcal{A}^{\ell_{\infty} \cup \text{ RT}}(\vx) =  \mathcal{A}^{\ell_{\infty}}(\vx) \cup \mathcal{A}^{\text{RT}}(\vx)$.

\section{On the Difficulty of Attaining Compositional Robustness with Linear Classifiers}

In this section, we theoretically demonstrate the difficulty of defending against an $\ell_{\infty} \circ \text{RT}$ compositional adversary with a linear classifier on a simple statistical setting. 

\subsection{Statistical Setting}

To theoretically analyze the compositional adversarial setting, we use the statistical distribution proposed in \cite{tsipras2018robustness}. Namely, we study a binary classification problem with $d$-dimensional input features, in which the first feature $X_0$ is strongly correlated with the output label $y$ with probability $p$, and the remaining features are weakly correlated with $y$. The distribution can be written as follows:
\begin{align}\label{eq:d}
\hspace{-4px}Y \stackrel{\text{u.a.r.}}{\sim} \{-1, 1\}, \; X_0 | Y = y := \begin{cases}
			y, & \text{w.p. } p;\\
            -y, & \text{w.p. } 1-p,
		 \end{cases} \; X_t | Y = y \sim \mathcal{N}\left(y\eta, 1\right),\;\; 1 \leq t\leq d-1, 
\end{align}
where $\eta = \Theta(\frac{1}{\sqrt{d}})$ and $p \geq 0.5$. We assume that an $\ell_{\infty}$ adversary has budget $\epsilon = 2 \eta$, similar to \cite{tsipras2018robustness}. Moreover, we define an RT transformation as it is defined in \cite{tramer2019multiple}. Concretely, an RT transformation is defined as a swap between the strongly correlated feature $X_0$ and a weakly correlated feature $X_t$, $1 \leq t \leq d-1$. To constrain the RT transformation, we assume that an RT adversary can swap $X_0$ with at most $N$ positions on the input signal. If we assume the input features $X_0$, \dots, $X_{d-1}$ lie on a 2-dimensional grid, then this definition of an RT transformation serves as a realistic abstraction of applying an RT transformation to an image using \textit{nearest} interpolation and rotating about the image's center. Namely, since the distribution over the last $d-1$ features is permutation invariant, then the only power of an RT transformation is to move the strongly correlated feature, where $N$ defines the number of reachable pixels that the strongly correlated feature can be mapped to via an RT transformation. For example, when considering only translations, we have $N = (2\delta_x^{\text{max}} + 1)(2 \delta_y^{\text{max}} + 1)$. We now state a theorem that establishes the difficulty of defending against an $\ell_{\infty} \circ \text{RT}$ compositional adversary with a linear classifier.
\begin{theorem}[A linear classifier cannot attain nontrivial $\ell_{\infty} \circ \text{RT}$ robustness]
\label{theorem: main}
Given data distribution $\mathcal{D}$ where $p \geq \frac{1}{2}$ , $\eta \geq \frac{1}{\sqrt{d}}$, and $d \geq 24$, no linear classifier $f:\Rb^{d} \rightarrow \{ -1, 1 \}$, where $f(\vx) = \sign(\vw^T \vx)$, can obtain robust accuracy $> 0.5$ under the $\ell_{\infty}\circ \text{RT}$ threat model with $\ell_{\infty}$ budget $\epsilon=2\eta$ and RT budget $N = \frac{d}{8}$.
\end{theorem}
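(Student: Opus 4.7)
The plan is to exhibit two simple adversarial strategies whose union already suffices: for every linear weight $\vw$ at least one of them forces the robust error to be at least $1/2$. Since $\mathrm{sign}(\vw^T\vx)$ is invariant under positive rescaling, normalize $\|\vw\|_1 = 1$. A short symmetrization argument---every feature has non-negative conditional mean given $y = 1$, while flipping the sign of any $w_t$ preserves $\|\vw\|_1$ and $\sum_t w_t^2$---reduces the problem to $\vw \geq 0$; I flag this step below as the one needing the most care.

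Fix $y = 1$ by the distribution's symmetry under $(\vx, y) \mapsto (-\vx, -y)$, and consider (A) the pure $\ell_\infty$ shift $\vx \mapsto \vx - 2\eta\,\vone$, and (B) first swap $X_0$ with $X_{t^*}$ where $t^* = \arg\min_{t \in \mathcal{N}} w_t$, then apply the same shift. Both attacks reduce $\vw^T \vx$ by $\epsilon\|\vw\|_1 = 2\eta$. Conditioning further on $X_0 \in \{-1,+1\}$ turns $\vw^T\vx''$ into a Gaussian random variable (the other coordinates are Gaussian), so its conditional median equals its conditional mean. A direct calculation using $\|\vw\|_1 = 1$ gives the conditional mean given $X_0=+1$ as $w_0(1-\eta)-\eta$ for attack A and $w_{t^*}(1-\eta)-\eta$ for attack B, while both attacks produce conditional mean $\le -\eta < 0$ given $X_0=-1$. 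Hence $\Pr[\vw^T\vx''<0 \mid y=1] \geq 1/2$ whenever either $w_0 \leq \eta/(1-\eta)$ (attack A succeeds) or $w_{t^*} \leq \eta/(1-\eta)$ (attack B succeeds).

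Close the gap by pigeonhole on the RT reachable set: since $\vw \geq 0$ and $\sum_{t\in\mathcal{N}} w_t \leq 1 - w_0$, the minimum satisfies $w_{t^*} \leq (1-w_0)/N$, so attack B's sufficient condition is implied by $w_0 \geq 1 - N\eta/(1-\eta)$. The union $\{w_0 \leq \eta/(1-\eta)\} \cup \{w_0 \geq 1-N\eta/(1-\eta)\}$ covers all of $[0,1]$ iff $(N+2)\eta \geq 1$. Plugging in $N = d/8$ and $\eta \geq 1/\sqrt{d}$, $(N+2)\eta \geq \sqrt{d}/8 + 2/\sqrt{d} = 1 + (\sqrt{d}-4)^2/(8\sqrt{d}) \geq 1$, with strict inequality for every $d \geq 24$. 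Therefore, for every $\vw$ at least one of attacks A or B makes the compositional adversary succeed with probability $\geq 1/2$.

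The main obstacle I anticipate is making the reduction to $\vw \geq 0$ fully rigorous. The RT swap entangles coordinate $0$ with a second coordinate, so the optimal adversary against $\vw$ may differ from that against $|\vw|$; the comparison must be between suprema over attacks, not values against any fixed attack. I would handle this by arguing that for each fixed pair $(t^*, \boldsymbol{\Delta})$ with $\|\boldsymbol{\Delta}\|_\infty \leq \epsilon$ the conditional distribution of $|\vw|^T\vx''$ given $(y, X_0)$ stochastically dominates that of $\vw^T\vx''$ in the direction that decreases misclassification (using $\eta \geq 0$ and $2p-1 \geq 0$ so every feature-wise mean is non-negative), and then taking suprema over $(t^*, \boldsymbol{\Delta})$ on both sides.
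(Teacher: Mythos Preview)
Your approach is genuinely different from the paper's. The paper argues by contradiction: it assumes an optimal robust linear classifier exists, proves three structural lemmas forcing that optimum to have uniform positive weights on the RT-reachable set $\mathcal R$ and zero elsewhere, and then computes its robust accuracy explicitly via the normal CDF. Your route is more direct and elementary: for an arbitrary $\vw$ you exhibit two concrete attacks and close the case split by a pigeonhole on $w_0$ versus $\min_{t\in\mathcal N} w_t$, ending with the clean identity $(N+2)/\sqrt d = 1 + (\sqrt d - 4)^2/(8\sqrt d)\ge 1$. This avoids the structural lemmas entirely and is arguably slicker.

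The one real gap is exactly where you anticipated it, but your proposed patch does not work. The stochastic-dominance claim fails conditionally on $X_0=-1$: with the same fixed $(t^*,\boldsymbol\Delta)$, the difference $|\vw|^T\vx'' - \vw^T\vx'' = \sum_{t:w_t<0}2|w_t|\,x''_t$ picks up the term $2|w_{t^*}|\cdot(-1)$ whenever $w_{t^*}<0$, because after the swap position $t^*$ holds the deterministic value $X_0=-1$. That term is negative, so $|\vw|^T\vx''$ can have a strictly smaller conditional mean than $\vw^T\vx''$ there, i.e., $|\vw|$ has \emph{higher} conditional error than $\vw$---the wrong direction for your reduction. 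Taking suprema over $(t^*,\boldsymbol\Delta)$ on each side does not repair this, since the comparison already fails for a single fixed attack.

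The cleanest fix is to drop the reduction and handle signs directly. Use the sign-adapted shift $\boldsymbol\Delta=-2\eta y\,\mathrm{sign}(\vw)$ throughout (so $\vw^T\boldsymbol\Delta=-2\eta$ regardless of signs). Then: (i) if $w_0\le 0$, attack A alone already gives total error $\ge\tfrac12$---writing the error as $p\,\Phi(a)+(1-p)\,\Phi(b)$ with $a-b=-2w_0/\sigma\ge 0$ and $a+b=2\eta(2-\sum_{t\ge1}w_t)/\sigma>0$, the hypothesis $p\ge\tfrac12$ puts more weight on the larger term, so the error is at least $\tfrac12(\Phi(a)+\Phi(b))>\tfrac12$; (ii) if $w_0>0$, condition on $X_0$: given $X_0=-1$ attack A has mean $\le -w_0(1+\eta)-\eta<0$, and given $X_0=+1$, either some $w_t\le 0$ for $t\in\mathcal N$ (swap there; the mean is $\le -\eta$) or all such $w_t>0$ and your pigeonhole bound $w_{t^*}\le(1-w_0)/N$ together with $S\le 1$ yields exactly your dichotomy $w_0\le\eta/(1-\eta)$ or $w_0\ge 1-N\eta/(1-\eta)$. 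The closing arithmetic is then identical to yours.
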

This theorem shows that under reasonable constraints on the compositional adversary, a linear classifier can perform no better than random, even in the infinite data limit. We note by contrast that a linear classifier can attain $> 0.99$ natural accuracy in this statistical setting; \textit{e.g.}, see \cite{tsipras2018robustness}. This result distinguishes itself from Theorem 4 in \cite{tramer2019multiple}, in that \cite{tramer2019multiple} show that an adversary that composes $\ell_{\infty}$ and RT perturbations yields a stronger attack than a union adversary, whereas we show that a linear classifier cannot have nontrivial robustness against a compositional adversary under this statistical setting. We emphasize that although no linear classifier can attain nontrivial robustness on this statistical setting, networks with sufficient depth and capacity may be able to attain nontrivial robustness (see $\text{TRADES}_{\text{All}}$ in Tables \ref{table:mnist} and \ref{table:cifar10}). Nevertheless, this result highlights the difficulty of attaining compositional robustness in this setting. We next explore how we can design robust models in this well-motivated compositional setting.

\section{Experiments}
\label{sec:methodology}

\subsection{Proposed defense methods}
\label{sec:defensemethods}
To explore the space of compositional adversarial examples and the compositional threat model, we train a family of empirical defenses constructed from TRADES \cite{zhang2019trades} and evaluate these defenses in a white-box setting. We choose a white-box setting to assess the full adversarial strength of these compositional adversarial examples. Below, we have a general form for the TRADES objective:
\begin{align}
    \min_f \mathbb{E} \Bigg\{ \underbrace{\mathcal{L}(f(\vx), Y)}_{\text{Natural Accuracy}} + \beta \underbrace{\max_{\vx' \in \mathcal{A}(\vx)} \mathcal{L}(f(\vx), f(\vx'))}_{\text{Robustness under } \mathcal{A}(\vx)} \Bigg\}.\label{eq:trades}
\end{align}
We train a family of TRADES models under the various threat models discussed in Section  \ref{sec:preliminaries}. Concretely, we train the following family of TRADES defense methods: $\text{TRADES}_{\ell_{\infty}}$, $\text{TRADES}_{\text{RT}}$, $\text{TRADES}_{\ell_{\infty} \cup \text{ RT}}$, and $\text{TRADES}_{\ell_{\infty} \circ \text{ RT}}$, where the subscript indicates the threat model being considered during training. To train TRADES models under these new threat models, we require a way to efficiently solve the inner optimization problem in the TRADES objective for the new corresponding definitions of $\mathcal{A}(\vx)$. In the $\ell_{\infty}$ case, we perform Projected Gradient Descent (PGD) for a small number of steps, as is typically done \cite{madry2018towards}. For the RT-threat model, we perform a \textbf{Worst-of-10} search: we sample 10 random valid affine transformations and select the affine transformed image that attains the highest loss, as is done in \cite{engstrom2019exploring}. For the robustness loss function, we use the KL-divergence between the logits of the natural image and the logits of the transformed image. For the union setting, we use an existing approach called the \textbf{Max Strategy} \cite{tramer2019multiple}, in which we compute an $\ell_{\infty}$ perturbation using PGD and an RT perturbation using Worst-of-10, and select the perturbation that attains the maximum KL-divergence loss. For the compositional setting, we propose the \textbf{Worst-on-Worst} strategy, whereby we first compute an RT adversarial example using Worst-of-10, and then we perform PGD on the worst RT-perturbed image. Worst-on-Worst implicitly assumes that the ``worst'' adversarial image from Worst-of-10 will produce the ``worst'' compositional adversarial example.

\begin{table}[t] 
    \centering
    \caption{\textbf{MNIST results for different defense methods.} Columns correspond to robust accuracy under different perturbation types, whereas rows correspond to different defense models. All RT attacks utilize our grid search strategy. PGD attacks use $40$ iterations on MNIST. The best performing entry under a given attack is \textbf{bolded}, while the second best is \underline{underlined}.} 
    \fontsize{7}{8.5}\selectfont 
    \begin{tabularx}{0.975\linewidth}{lc|rr|rr|rrrr} 
\toprule
Defense \text{\textbackslash} Attack &\multicolumn{1}{c|}{$\beta$}&AAA $\circ$ RT&PGD $\circ$ RT&AAA $\cup$ RT&PGD $\cup$ RT&AAA&PGD&RT&Natural \\\midrule 
$\text{TRADES}_{\text{All}}$ & $1.0$ & $49.47$ & $68.60$ & $90.28$ & $\underline{92.49}$ & $92.54$ & $95.60$ & $93.93$ & $99.34$ \\
$\text{TRADES}_{\ell_{\infty} \circ \text{ RT}}$& $1.0$ & $55.61$ & $71.37$ & $89.28$ & $91.51$ & $91.66$ & $95.10$ & $93.06$ & $99.03$ \\
$\text{TRADES}_{\ell_{\infty} \cup \text{ RT}}$ & $1.0$ & $21.97$ & $49.16$ & $89.67$ & $92.31$ & $91.32$ & $95.20$ & $94.08$ & $99.47$ \\
$\text{TRADES}_{\ell_{\infty}}$ & $1.0$ & $0.02$ & $00.11$ & $00.43$ & $00.43$ & $\underline{92.99}$ & $95.88$ & $00.57$ & $\underline{99.52}$\\
$\text{TRADES}_{\text{RT}}$ & $1.0$ & $0.00$ & $0.07$ & $0.00$ & $0.18$ & $0.00$ & $0.18$ & $96.63$ & $\boldsymbol{99.64}$ \\[0.8mm]

$\text{TRADES}_{\text{All}}$ & $3.0$ & $58.68$ & $72.70$ & $90.59$ & $92.35$ & $92.33$ & $95.00$ & $94.05$ & $98.94$ \\
$\text{TRADES}_{\ell_{\infty} \circ \text{ RT}}$& $3.0$ & $\underline{59.28}$ & $\underline{73.74}$ & $88.93$ & $91.07$ & $91.03$ & $94.01$ & $93.09$ & $98.65$ \\
$\text{TRADES}_{\ell_{\infty} \cup \text{ RT}}$ & $3.0$ & $39.72$ & $66.49$ & $\underline{91.00}$ & $\boldsymbol{93.43}$ & $92.21$ & $95.62$ & $95.00$ & $99.28$ \\
$\text{TRADES}_{\ell_{\infty}}$ & $3.0$ & $0.04$ & $0.11$ & $0.48$ & $0.48$ & $\boldsymbol{93.83}$&$\boldsymbol{96.47}$&$0.64$&$99.35$ \\
$\text{TRADES}_{\text{RT}}$ & $3.0$ & $0.00$&$0.01$&$0.00$&$0.01$&$0.00$&$0.17$&$\boldsymbol{97.54}$&$99.49$ \\[0.8mm]

$\text{TRADES}_{\text{All}}$ & $6.0$ & $\boldsymbol{61.74}$ & $\boldsymbol{75.38}$ & $90.33$ & $92.29$ & $92.19$ & $95.25$ & $93.60$	& $98.73$ \\
$\text{TRADES}_{\ell_{\infty} \circ \text{ RT}}$& $6.0$ & $58.21$ & $72.54$ & $88.25$ & $90.27$ & $90.62$ & $93.75$ & $92.21$ & $98.23$ \\
$\text{TRADES}_{\ell_{\infty} \cup \text{ RT}}$ & $6.0$ & $44.52$ & $69.97$ & $\boldsymbol{91.25}$ & $\boldsymbol{93.43}$ & $92.49$ & $95.46$ & $94.99$ & $99.22$ \\
$^\dagger\text{TRADES}_{\ell_{\infty}}$ & $6.0$ & $0.01$ & $0.07$ & $0.48$ & $0.48$ & $92.73$ & $\underline{96.07}$ & $0.58$ & $99.48$ \\
$\text{TRADES}_{\text{RT}}$ & $6.0$ & $0.00$ & $0.00$ & $0.00$ & $0.07$ & $0.00$ & $0.07$ & $\underline{97.48}$ & $99.42$ \\[0.8mm]

Natural & - & $0.00$ & $0.00$ & $0.00$ & $2.13$ & $0.00$ & $2.18$ & $0.19$ & $99.18$ \\

\bottomrule
\end{tabularx}
    \begin{flushleft}
    \tiny
    \vspace{-4pt}
    \hspace{14pt}$\dagger$ model checkpoint from \cite{zhang2019trades}.\\
\end{flushleft}
\vspace{-7pt}
\label{table:mnist}
\end{table}

\subsubsection{\texorpdfstring{$\text{TRADES}_{\text{All}}$}{l infinity}}
\label{sec:tradesall}

We observe that the family of TRADES models proposed in Section \ref{sec:defensemethods} each train a model exclusively on adversarial images tailored to their respective threat models. However, this may not strike a favourable balance in performance between the different threat models at evaluation time. To address this issue, we propose $\text{TRADES}_{\text{All}}$, whereby adversarial training alternates between $\ell_{\infty}$ adversarial examples, RT adversarial examples and compositional adversarial examples. Concretely, given training image $\vx$, $\text{TRADES}_{\text{All}}$ selects uniformly at random between a corresponding $\ell_{\infty}$ adversarial example, RT adversarial example, and a compositional adversarial example for $\vx$ when solving the inner maximization problem. The aim of this defense is to strike the right balance between all these perturbation types, without over-optimizing on a single perturbation type. 

\subsection{Attack methods background}
\label{attacks}

We now describe the white-box attack algorithms used to evaluate our family of TRADES defenses. For the $\ell_{\infty}$-based attacks, we evaluate on Adaptive AutoAttack \cite{liu2022aaa}, or AAA, which is a recently published state-of-the-art adaptive white box attack. The exact algorithmic details of AAA can be found in \cite{liu2022aaa}. For the RT based-attack, we perform a simple grid-search on the 3 parameters $\theta, \delta_x, \delta_y$ that define the affine transformation. The grid search involves evenly-spaced values for each parameter: $\theta$ has 12 values, and $\delta_x, \delta_y$ each have 5 values. Our complete attack suite is as follows: AAA, PGD, RT (Grid Search), AAA $\cup$ RT, PGD $\cup$ RT, AAA $\circ$ RT, and PGD $\circ$ RT. For crafting perturbations of multiple types, the union attacks use the Max Strategy and the composition attacks use the Worst-of-Worst strategy.



\begin{table}[t]
    \centering
    \caption{\textbf{CIFAR-10 results for different defense methods.} Columns correspond to robust accuracy under different perturbation types, while rows correspond to different defense models. All RT attacks utilize our grid search strategy. PGD attacks use $20$ iterations on CIFAR-10. The best performing entry under a given attack is \textbf{bolded}, while the second best is \underline{underlined}.} 
    \fontsize{7}{8.5}\selectfont 
    \begin{tabularx}{0.975\linewidth}{lc|rr|rr|rrrr} 
\toprule
Defense \text{\textbackslash} Attack &\multicolumn{1}{c|}{$\beta$}&AAA $\circ$ RT&PGD $\circ$ RT&AAA $\cup$ RT&PGD $\cup$ RT&AAA&PGD&RT&Natural \\\midrule
$\text{TRADES}_{\text{All}}$ & $3.0$ & $\underline{33.31}$&$\underline{37.60}$&$48.28$&$52.30$&$48.53$&$52.66$&$77.87$&$85.49$ \\
$\text{TRADES}_{\ell_{\infty} \circ \text{ RT}}$& $3.0$ & $25.79$&$30.12$&$40.02$&$44.24$&$40.32$&$44.66$&$73.63$&$83.81$ \\
$\text{TRADES}_{\ell_{\infty} \cup \text{ RT}}$ & $3.0$ & $7.01$&$10.47$&$46.46$&$49.79$&$47.91$&$51.48$&$76.14$&$86.42$ \\
$\text{TRADES}_{\ell_{\infty}}$ & $3.0$ & $2.51$&$3.00$&$9.53$&$9.85$&$\underline{51.20}$&$54.13$&$15.50$&$86.36$ \\
$\text{TRADES}_{\text{RT}}$ & $3.0$ & $0.00$&$0.00$&$0.00$&$0.00$&$0.00$&$0.00$&$\boldsymbol{82.31}$&$\boldsymbol{95.46}$ \\[0.8mm]

$\text{TRADES}_{\text{All}}$ & $6.0$ & $\boldsymbol{35.33}$ &$\boldsymbol{40.24}$&$\underline{49.17}$&$\underline{53.53}$&$49.45$&$54.00$&$76.01$&$83.65$ \\
$\text{TRADES}_{\ell_{\infty} \circ \text{ RT}}$& $6.0$ & $19.40$&$23.69$&$38.62$&$42.81$&$39.84$&$44.47$&$68.01$&$82.77$ \\
$\text{TRADES}_{\ell_{\infty} \cup \text{ RT}}$ & $6.0$ & $14.25$&$20.43$&$\boldsymbol{49.86}$&$\boldsymbol{53.65}$&$50.66$&$\underline{54.77}$&$77.39$&$84.72$ \\
$^\dagger\text{TRADES}_{\ell_{\infty}}$ & $6.0$ & $3.52$&$4.49$&$10.55$&$11.19$&$\boldsymbol{53.01}$&$\boldsymbol{56.63}$&$18.51$&$84.92$ \\
$\text{TRADES}_{\text{RT}}$ & $6.0$ & $0.00$&$0.00$&$0.00$&$0.01$&$0.00$&$0.01$&$\underline{81.98}$&$\underline{94.91}$ \\[0.8mm]

Natural & - & $0.00$&$0.00$&$0.00$&$0.00$&$0.00$&$0.00$&$25.94$&$93.68$ \\

\bottomrule
\end{tabularx}
\begin{flushleft}
\tiny
\vspace{-4pt}
\hspace{14pt}$\dagger$ model checkpoint from \cite{zhang2019trades}.\\
\end{flushleft}
\vspace{-7pt}
\label{table:cifar10}
\end{table}

\subsection{Results}
In this section, we present our empirical evaluation of the proposed family of TRADES defense models trained and evaluated on MNIST and CIFAR-10 in the threat models introduced above. We provide training details and the computational complexity of the proposed methods in Section~\ref{sec:app-complex} of the Appendix.

The results of our empirical evaluations are reported in Tables \ref{table:mnist} and \ref{table:cifar10}. On both datasets, we observe that composition attacks are empirically stronger than union attacks. This finding aligns with the theoretical result from \cite{tramer2019multiple}, which shows that the compositional setting is harder than the union in a simple statistical setting. Furthermore, $\ell_\infty$ attacks are empirically stronger than their RT counterparts against all non-specialized models. Analyzing the effect of $\beta$, we note that in general $\ell_\infty$ robustness benefits from higher values, while robustness to RT attacks and natural accuracy generally decreases as $\beta$ is increased. Moreover, RT robust models obtain the strongest natural accuracy of all classifiers on all datasets, showing that RT robustness and natural accuracy complement each other on these natural image distributions. 

The sixth row of Table \ref{table:cifar10} showcases the strong overall performance of the $\text{TRADES}_{\text{All}}$ strategy at $\beta=6.0$, which performs best on composition attacks and when accounting for all settings together on CIFAR-10. Its improved performance against composition attacks when compared to the model trained exclusively on compositions suggests that alternating training schemes can be beneficial in this setting. Moreover, the mediocre performance of the union trained models in these settings demonstrates that training on the union is insufficient to defend against a composite adversary. 

On MNIST (Table~\ref{table:mnist}), we observe similar trends to CIFAR-10. The All-trained models shine against the composition attack and $\text{TRADES}_{\text{All}}$ at $\beta=6.0$ performs strongest of all. Specialized models perform best on the union, $\ell_\infty$, and RT, while TRADES$_\text{RT}$ at $\beta=1.0$ performs strongest on natural images. 

\vspace{-5pt}
\subsection{Analyzing the strong performance of TRADES\texorpdfstring{$_{\text{All}}$}{all}}
\vspace{-5pt}
\label{sec:logits}
In this section, we investigate the strong performance of our $\text{TRADES}_{\text{All}}$ classifier by inspecting its logit's Lipschitz constant for different RT transformation sizes. Figure~\ref{fig:logits-linf-rt-no-nat} plots the median $\|f(\Ic)-f(\Ic')\|_2$ for a batch of 128 CIFAR-10 images on the y-axis, where $f$ is one of our trained classfiers, $\Ic$ is the natural image, and $\Ic'=\text{PGD}_{10}(\mathcal{T}(\Ic; \theta, \delta_x, \delta_y))$ is the compositional adversarial example. The transformation strength is on the $x$-axis, measured by summing $|\theta|, |\delta_x|,$ and $ |\delta_y|$. The RT transformations are sampled in the same way as our grid-search (Section~\ref{sec:methodology}), therefore applying a PGD-10 perturbation to these images can be seen as a weak composite adversary. We note that $\text{TRADES}_{\text{All}}$ preserves similar median logit stability across a range of $\ell_\infty$-perturbed RT transformed images, suggesting that logit stability is desirable for achieving strong robustness to the composition. By stability we mean attaining a small Lipschitz constant across the valid range of RT transformations.

When assessing logit stability for different RT transformation sizes \textit{without} applying $\ell_\infty$ perturbations (see Figure~\ref{fig:logits-rt}), we note that the strongest natural classifiers (RT and Natural) are nearly invariant to RT transformations for $|\theta| + |\delta_x| + |\delta_y| \leq 20$, yet they achieve nearly $0$ accuracy against the composition. Interestingly,  $\text{TRADES}_{\text{All}}$ is the most stable of all robust models (excluding RT) to RT transformations, with a median Lipschitz constant of $\sim$$0.02$ across the board (close to RT and natural classifiers), yet it is the most robust to the composite adversary. This suggests that strong stability to RT transformations is needed for robustness on the composition, but that too much may lead to degraded accuracy against $\ell_\infty$ adversaries. In lieu of this discovery, we can see our $\text{TRADES}_{\text{All}}$ training protocol as being designed to make exactly these tradeoffs: its RT TRADES training encourages logit invariance to RT transformations, while $\ell_\infty$ and $\ell_\infty \circ \text{RT}$ training encourage it to also be stable in those settings.

\subsection{Takeaways} 
Our empirical study has three main takeaways: composite attacks are stronger than $\ell_\infty$, RT, and $\ell_\infty \cup$ RT; complex alternating schemes may be needed to train defenses robust to $\ell_\infty \circ$ RT; and robustness tradeoffs exist between specialized and general models. As the results show, all defense methods on both datasets show significant reductions in robust accuracy when defending against composition attacks, while these images appear no different than their RT counterparts (Figure~\ref{fig:compositional-adv-example}). This demonstrates that obtaining truly robust models may be even more difficult than was previously thought. While the problem is certainly very difficult, alternating training schemes seem to help bridge the gap between performance against $\ell_\infty \circ$ RT adversaries and $\ell_\infty$ adversaries. Our $\text{TRADES}_{\text{All}}$ training strategy garners the most robustness in this setting, while sacrificing relatively little in terms of $\ell_\infty$ or natural accuracy, compared to its $\ell_\infty$ trained counterparts.

\vspace{-5pt}
\section{Conclusion}
\vspace{-5pt}
\label{sec:conclusion}
Defending against compositional threat models is a difficult but necessary task. 
Our contributions take a step towards this goal by highlighting the difficulty of the composite setting for a linear classifier; proposing $\text{TRADES}_{\text{All}}$, a new training strategy, which overcomes this difficulty; and empirically benchmarking its performance relative to other relevant baselines.
Our experiments show that alternating training schemes are critical for striking a balance between the different threat models. However, even our best performing method, TRADES$_\text{All}$, does not match the robust performance of specialized models in the different settings considered (except $\ell_\infty \circ \text{RT}$). These results highlight the need for future research under this threat model to improve theoretical understanding and build stronger empirical defenses.



\clearpage

{\small
\bibliographystyle{abbrvnat}
\bibliography{tsrml_2022.bib}

\begin{thebibliography}{22}
\providecommand{\natexlab}[1]{#1}
\providecommand{\url}[1]{\texttt{#1}}
\expandafter\ifx\csname urlstyle\endcsname\relax
  \providecommand{\doi}[1]{doi: #1}\else
  \providecommand{\doi}{doi: \begingroup \urlstyle{rm}\Url}\fi

\bibitem[Balunovic et~al.(2019)Balunovic, Baader, Singh, Gehr, and
  Vechev]{balunovic2019certifygeometric}
M.~Balunovic, M.~Baader, G.~Singh, T.~Gehr, and M.~T. Vechev.
\newblock Certifying geometric robustness of neural networks.
\newblock In H.~M. Wallach, H.~Larochelle, A.~Beygelzimer,
  F.~d'Alch{\'{e}}{-}Buc, E.~B. Fox, and R.~Garnett, editors, \emph{Advances in
  Neural Information Processing Systems 32: Annual Conference on Neural
  Information Processing Systems 2019, NeurIPS 2019, December 8-14, 2019,
  Vancouver, BC, Canada}, pages 15287--15297, 2019.
\newblock URL
  \url{https://proceedings.neurips.cc/paper/2019/hash/f7fa6aca028e7ff4ef62d75ed025fe76-Abstract.html}.

\bibitem[Chen et~al.(2022)Chen, Duan, Wang, He, Lu, Dai, and
  Qiao]{chen2022segmentation}
Z.~Chen, Y.~Duan, W.~Wang, J.~He, T.~Lu, J.~Dai, and Y.~Qiao.
\newblock Vision transformer adapter for dense predictions.
\newblock \emph{CoRR}, abs/2205.08534, 2022.
\newblock URL \url{https://doi.org/10.48550/arXiv.2205.08534}.

\bibitem[Engstrom et~al.(2017)Engstrom, Tsipras, Schmidt, and
  Madry]{engstrom2017suffice}
L.~Engstrom, D.~Tsipras, L.~Schmidt, and A.~Madry.
\newblock A rotation and a translation suffice: Fooling cnns with simple
  transformations.
\newblock \emph{CoRR}, abs/1712.02779, 2017.
\newblock URL \url{http://arxiv.org/abs/1712.02779}.

\bibitem[Engstrom et~al.(2019)Engstrom, Tran, Tsipras, Schmidt, and
  Madry]{engstrom2019exploring}
L.~Engstrom, B.~Tran, D.~Tsipras, L.~Schmidt, and A.~Madry.
\newblock Exploring the landscape of spatial robustness.
\newblock In K.~Chaudhuri and R.~Salakhutdinov, editors, \emph{Proceedings of
  the 36th International Conference on Machine Learning, {ICML} 2019, 9-15 June
  2019, Long Beach, California, {USA}}, volume~97 of \emph{Proceedings of
  Machine Learning Research}, pages 1802--1811. {PMLR}, 2019.
\newblock URL \url{http://proceedings.mlr.press/v97/engstrom19a.html}.

\bibitem[Goodfellow et~al.(2015)Goodfellow, Shlens, and
  Szegedy]{goodfellow2015steganography}
I.~J. Goodfellow, J.~Shlens, and C.~Szegedy.
\newblock Explaining and harnessing adversarial examples.
\newblock In Y.~Bengio and Y.~LeCun, editors, \emph{3rd International
  Conference on Learning Representations, {ICLR} 2015, San Diego, CA, USA, May
  7-9, 2015, Conference Track Proceedings}, 2015.
\newblock URL \url{http://arxiv.org/abs/1412.6572}.

\bibitem[Kanbak et~al.(2018)Kanbak, Moosavi{-}Dezfooli, and
  Frossard]{kanbak2018geometricrobustness}
C.~Kanbak, S.~Moosavi{-}Dezfooli, and P.~Frossard.
\newblock Geometric robustness of deep networks: Analysis and improvement.
\newblock In \emph{2018 {IEEE} Conference on Computer Vision and Pattern
  Recognition, {CVPR} 2018, Salt Lake City, UT, USA, June 18-22, 2018}, pages
  4441--4449. Computer Vision Foundation / {IEEE} Computer Society, 2018.
\newblock URL
  \url{http://openaccess.thecvf.com/content\_cvpr\_2018/html/Kanbak\_Geometric\_Robustness\_of\_CVPR\_2018\_paper.html}.

\bibitem[Laidlaw et~al.(2021)Laidlaw, Singla, and Feizi]{laidlaw2021perceptual}
C.~Laidlaw, S.~Singla, and S.~Feizi.
\newblock Perceptual adversarial robustness: Defense against unseen threat
  models.
\newblock In \emph{9th International Conference on Learning Representations,
  {ICLR} 2021, Virtual Event, Austria, May 3-7, 2021}. OpenReview.net, 2021.
\newblock URL \url{https://openreview.net/forum?id=dFwBosAcJkN}.

\bibitem[Li et~al.(2021)Li, Weber, Xu, Rimanic, Kailkhura, Xie, Zhang, and
  Li]{li2021tss}
L.~Li, M.~Weber, X.~Xu, L.~Rimanic, B.~Kailkhura, T.~Xie, C.~Zhang, and B.~Li.
\newblock {TSS:} transformation-specific smoothing for robustness
  certification.
\newblock In Y.~Kim, J.~Kim, G.~Vigna, and E.~Shi, editors, \emph{{CCS} '21:
  2021 {ACM} {SIGSAC} Conference on Computer and Communications Security,
  Virtual Event, Republic of Korea, November 15 - 19, 2021}, pages 535--557.
  {ACM}, 2021.
\newblock URL \url{https://doi.org/10.1145/3460120.3485258}.

\bibitem[Liu et~al.(2022)Liu, Cheng, Gao, Liu, Zhang, and Song]{liu2022aaa}
Y.~Liu, Y.~Cheng, L.~Gao, X.~Liu, Q.~Zhang, and J.~Song.
\newblock Practical evaluation of adversarial robustness via adaptive auto
  attack.
\newblock \emph{CoRR}, abs/2203.05154, 2022.
\newblock URL \url{https://doi.org/10.48550/arXiv.2203.05154}.

\bibitem[Madry et~al.(2018)Madry, Makelov, Schmidt, Tsipras, and
  Vladu]{madry2018towards}
A.~Madry, A.~Makelov, L.~Schmidt, D.~Tsipras, and A.~Vladu.
\newblock Towards deep learning models resistant to adversarial attacks.
\newblock In \emph{6th International Conference on Learning Representations,
  {ICLR} 2018, Vancouver, BC, Canada, April 30 - May 3, 2018, Conference Track
  Proceedings}. OpenReview.net, 2018.
\newblock URL \url{https://openreview.net/forum?id=rJzIBfZAb}.

\bibitem[Maini et~al.(2020)Maini, Wong, and Kolter]{maini2020union}
P.~Maini, E.~Wong, and J.~Z. Kolter.
\newblock Adversarial robustness against the union of multiple perturbation
  models.
\newblock In \emph{Proceedings of the 37th International Conference on Machine
  Learning, {ICML} 2020, 13-18 July 2020, Virtual Event}, volume 119 of
  \emph{Proceedings of Machine Learning Research}, pages 6640--6650. {PMLR},
  2020.
\newblock URL \url{http://proceedings.mlr.press/v119/maini20a.html}.

\bibitem[Maini et~al.(2022)Maini, Chen, Li, and Song]{maini2022categorization}
P.~Maini, X.~Chen, B.~Li, and D.~Song.
\newblock Perturbation type categorization for multiple adversarial
  perturbation robustness.
\newblock In \emph{The 38th Conference on Uncertainty in Artificial
  Intelligence}, 2022.

\bibitem[Mao et~al.(2021)Mao, Chen, Wang, Su, He, and
  Xue]{mao2021compositeadversarialattacks}
X.~Mao, Y.~Chen, S.~Wang, H.~Su, Y.~He, and H.~Xue.
\newblock Composite adversarial attacks.
\newblock In \emph{Thirty-Fifth {AAAI} Conference on Artificial Intelligence,
  {AAAI} 2021, Thirty-Third Conference on Innovative Applications of Artificial
  Intelligence, {IAAI} 2021, The Eleventh Symposium on Educational Advances in
  Artificial Intelligence, {EAAI} 2021, Virtual Event, February 2-9, 2021},
  pages 8884--8892. {AAAI} Press, 2021.
\newblock URL \url{https://ojs.aaai.org/index.php/AAAI/article/view/17075}.

\bibitem[Szegedy et~al.(2014)Szegedy, Zaremba, Sutskever, Bruna, Erhan,
  Goodfellow, and Fergus]{szegedy2014intriguing}
C.~Szegedy, W.~Zaremba, I.~Sutskever, J.~Bruna, D.~Erhan, I.~J. Goodfellow, and
  R.~Fergus.
\newblock Intriguing properties of neural networks.
\newblock In Y.~Bengio and Y.~LeCun, editors, \emph{2nd International
  Conference on Learning Representations, {ICLR} 2014, Banff, AB, Canada, April
  14-16, 2014, Conference Track Proceedings}, 2014.
\newblock URL \url{http://arxiv.org/abs/1312.6199}.

\bibitem[Tram{\`{e}}r and Boneh(2019)]{tramer2019multiple}
F.~Tram{\`{e}}r and D.~Boneh.
\newblock Adversarial training and robustness for multiple perturbations.
\newblock In H.~M. Wallach, H.~Larochelle, A.~Beygelzimer,
  F.~d'Alch{\'{e}}{-}Buc, E.~B. Fox, and R.~Garnett, editors, \emph{Advances in
  Neural Information Processing Systems 32: Annual Conference on Neural
  Information Processing Systems 2019, NeurIPS 2019, December 8-14, 2019,
  Vancouver, BC, Canada}, pages 5858--5868, 2019.
\newblock URL
  \url{https://proceedings.neurips.cc/paper/2019/hash/5d4ae76f053f8f2516ad12961ef7fe97-Abstract.html}.

\bibitem[Tsai et~al.(2022)Tsai, Hsiung, Chen, and Ho]{tsai2022compositional}
Y.~Tsai, L.~Hsiung, P.~Chen, and T.~Ho.
\newblock Towards compositional adversarial robustness: Generalizing
  adversarial training to composite semantic perturbations.
\newblock \emph{CoRR}, abs/2202.04235, 2022.
\newblock URL \url{https://arxiv.org/abs/2202.04235}.

\bibitem[Tsipras et~al.(2019)Tsipras, Santurkar, Engstrom, Turner, and
  Madry]{tsipras2018robustness}
D.~Tsipras, S.~Santurkar, L.~Engstrom, A.~Turner, and A.~Madry.
\newblock Robustness may be at odds with accuracy.
\newblock In \emph{International Conference on Learning Representations}, 2019.
\newblock URL \url{https://openreview.net/forum?id=SyxAb30cY7}.

\bibitem[Xiao et~al.(2018)Xiao, Zhu, Li, He, Liu, and Song]{xiao2018spatial}
C.~Xiao, J.~Zhu, B.~Li, W.~He, M.~Liu, and D.~Song.
\newblock Spatially transformed adversarial examples.
\newblock In \emph{6th International Conference on Learning Representations,
  {ICLR} 2018, Vancouver, BC, Canada, April 30 - May 3, 2018, Conference Track
  Proceedings}. OpenReview.net, 2018.
\newblock URL \url{https://openreview.net/forum?id=HyydRMZC-}.

\bibitem[Yu et~al.(2022)Yu, Wang, Vasudevan, Yeung, Seyedhosseini, and
  Wu]{yu2022coca}
J.~Yu, Z.~Wang, V.~Vasudevan, L.~Yeung, M.~Seyedhosseini, and Y.~Wu.
\newblock Coca: Contrastive captioners are image-text foundation models.
\newblock \emph{CoRR}, abs/2205.01917, 2022.
\newblock URL \url{https://doi.org/10.48550/arXiv.2205.01917}.

\bibitem[Zhang et~al.(2019)Zhang, Yu, Jiao, Xing, Ghaoui, and
  Jordan]{zhang2019trades}
H.~Zhang, Y.~Yu, J.~Jiao, E.~P. Xing, L.~E. Ghaoui, and M.~I. Jordan.
\newblock Theoretically principled trade-off between robustness and accuracy.
\newblock In K.~Chaudhuri and R.~Salakhutdinov, editors, \emph{Proceedings of
  the 36th International Conference on Machine Learning, {ICML} 2019, 9-15 June
  2019, Long Beach, California, {USA}}, volume~97 of \emph{Proceedings of
  Machine Learning Research}, pages 7472--7482. {PMLR}, 2019.
\newblock URL \url{http://proceedings.mlr.press/v97/zhang19p.html}.

\bibitem[Zhang et~al.(2022)Zhang, Li, Liu, Zhang, Su, Zhu, Ni, and
  Shum]{zhang2022dino}
H.~Zhang, F.~Li, S.~Liu, L.~Zhang, H.~Su, J.~Zhu, L.~M. Ni, and H.~Shum.
\newblock {DINO:} {DETR} with improved denoising anchor boxes for end-to-end
  object detection.
\newblock \emph{CoRR}, abs/2203.03605, 2022.
\newblock URL \url{https://doi.org/10.48550/arXiv.2203.03605}.

\bibitem[Zhang et~al.(2021)Zhang, Gong, Liu, Niu, Tian, Han, Sch{\"{o}}lkopf,
  and Zhang]{zhang2022causal}
Y.~Zhang, M.~Gong, T.~Liu, G.~Niu, X.~Tian, B.~Han, B.~Sch{\"{o}}lkopf, and
  K.~Zhang.
\newblock Adversarial robustness through the lens of causality.
\newblock \emph{CoRR}, abs/2106.06196, 2021.
\newblock URL \url{https://arxiv.org/abs/2106.06196}.

\end{thebibliography}
}

\appendix

\clearpage
\section{Training details \& training time of TRADES models}
\label{sec:app-complex}

\begin{table}[ht]
\scriptsize
    \centering
    \caption{\textbf{Training times in HH:MM format for different defense models.} All models were trained on a single NVIDIA V100 GPU with the same hyperparameters. The only difference between each entry is their solution to the inner-maximization problem, decribed in Section \ref{sec:defensemethods}. We note that reported times include one AAA attack performed after training which introduces some variance of approximately $\pm 30$ mins for CIFAR-10 and $\pm 7$ mins for MNIST.}
    \begin{tabular}{l|c|c|c|c|c|c}
    \toprule
         Dataset \text{\textbackslash} Defense &Natural&$\text{TRADES}_{\ell_\infty}$& $\text{TRADES}_{\text{All}}$ &$\text{TRADES}_{\text{RT}}$&$\text{TRADES}_{\ell_{\infty} \circ \text{ RT}}$ &$\text{TRADES}_{\ell_{\infty} \cup \text{ RT}}$  \\ 
         \midrule
         \textbf{MNIST} & $00:16$& $01:52$ & $05:49$& $07:05$& $08:24$ & $08:32$ \\
         \textbf{CIFAR-10}& $03:09$ & $35:25$ & $30:21$& $21:12$& $47:28$ & $49:36$ \\
         \bottomrule
    \end{tabular}
    \label{table:runtime}
\end{table}

\subsection{Training setup}
We take a number of steps to improve the reproducibility of our results and comparability to prior work. Firstly, we use the WRN-34-10 model for the CIFAR-10 experiments and the SmallCNN model for the MNIST experiments, as is done in TRADES, and to train our family of TRADES defenses we utilize the default hyperparameters included in the author's GitHub repository. \footnote{\label{foot:1}see "train\_trades\_cifar10.py" and "train\_trades\_mnist.py" at \url{https://github.com/yaodongyu/TRADES}} Secondly, we use the same seed for each experiment to obtain the same random weight initializations and dataset shuffles. Thirdly, to attack our defense models, we use the default code provided by Liu et al. \cite{liu2022aaa} for their AAA implementation and we borrow more code from the TRADES repository for PGD attacks, where PGD attacks on CIFAR-10 are run for $20$ iterations with $\epsilon = 0.031$, while attacks on MNIST are run for $40$ iterations with $\epsilon = 0.3$ (the same $\epsilon$ values are used for AAA). Lastly, in the RT-threat model, we set $\theta^{\text{max}} = 30, \delta_x^{\text{max}} = 3\text{px}, \delta_y^{\text{max}} = 3\text{px}$, which is consistent with prior work \cite{engstrom2019exploring}. 

\subsection{Training time}
We note that our $\text{TRADES}_{\text{All}}$ model achieves the best overall accuracy while being more computationally efficient than TRADES models trained on $\text{RT},\ell_{\infty} \circ \text{ RT},$ or $\ell_{\infty} \cup \text{ RT}$ settings (see CIFAR-10 table.~\ref{table:runtime}). This is because the All strategy only requires computing $\ell_\infty$ and $\text{RT}$ perturbations for $\frac{2}{3}$ of the images every epoch. 

\clearpage
\section{Logit Plots}
\label{sec:app-logits}

\begin{figure}[ht]
    \centering
    \includegraphics[width=\linewidth]{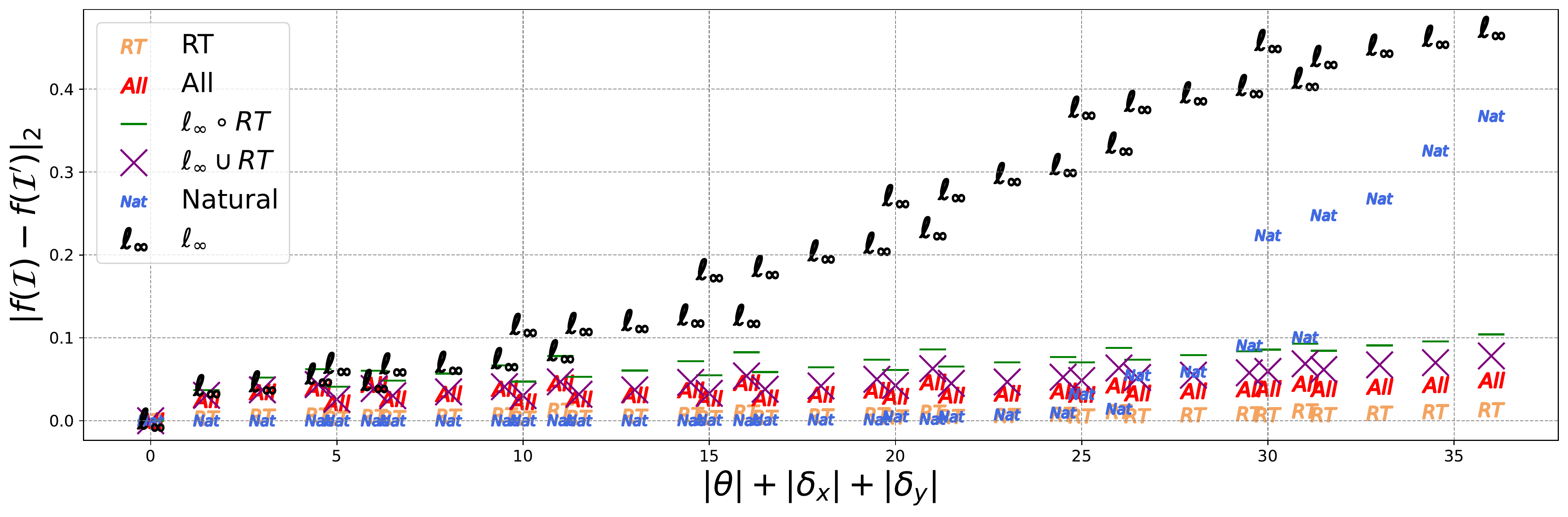}
    \caption{\textbf{$\text{TRADES}_{\text{All}}$ learns strong RT stability.} The figure plots the median logit difference between natural and RT-transformed images for different classifiers over a batch of 128 CIFAR-10 images for different RT transformation sizes. We note that the strongest natural classifiers (RT \& Natural) have a median Lipschitz constant near $0$ for smaller transformation sizes, while other models converge to a larger median Lipschitz constant.}
    \label{fig:logits-rt}
    \vspace{-10pt}
\end{figure}

\begin{figure}[ht]
    \centering
    \includegraphics[width=\linewidth]{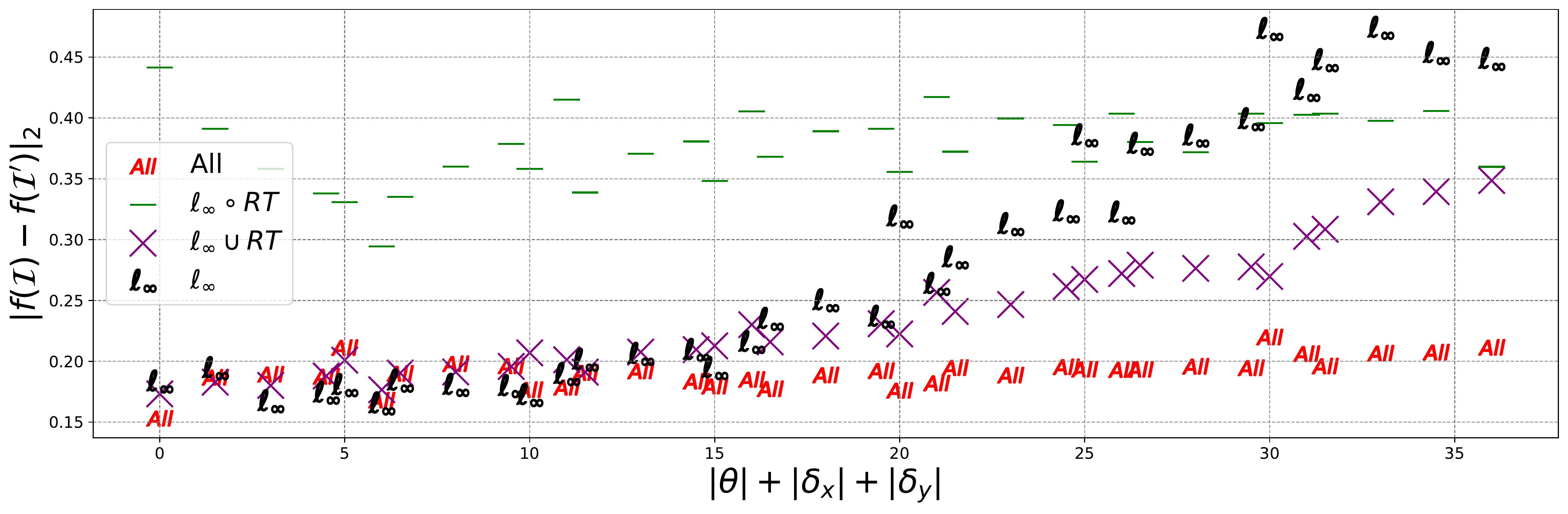}
    \caption{\textbf{$\text{TRADES}_{\text{All}}$ learns similar logit stability across a range of RT transformations.}The figure plots the median logit difference between natural and $\ell_\infty$-perturbed RT-transformed images (\textit{i.e.}, composite adversarial examples) for different classifiers over a batch of 128 CIFAR-10 images for different RT transformation sizes. The $x$-axis measures the strength of the RT transformation applied. We note that the natural and RT classifiers are not stable to these perturbations (see Figure~\ref{fig:logits-linf-rt}), while the other models garner more stability. $\text{TRADES}_{\text{All}}$ remains the most stable of all. }
    \label{fig:logits-linf-rt-no-nat}
    \vspace{-10pt}
\end{figure}

\begin{figure}[ht]
    \centering
    \includegraphics[width=\linewidth]{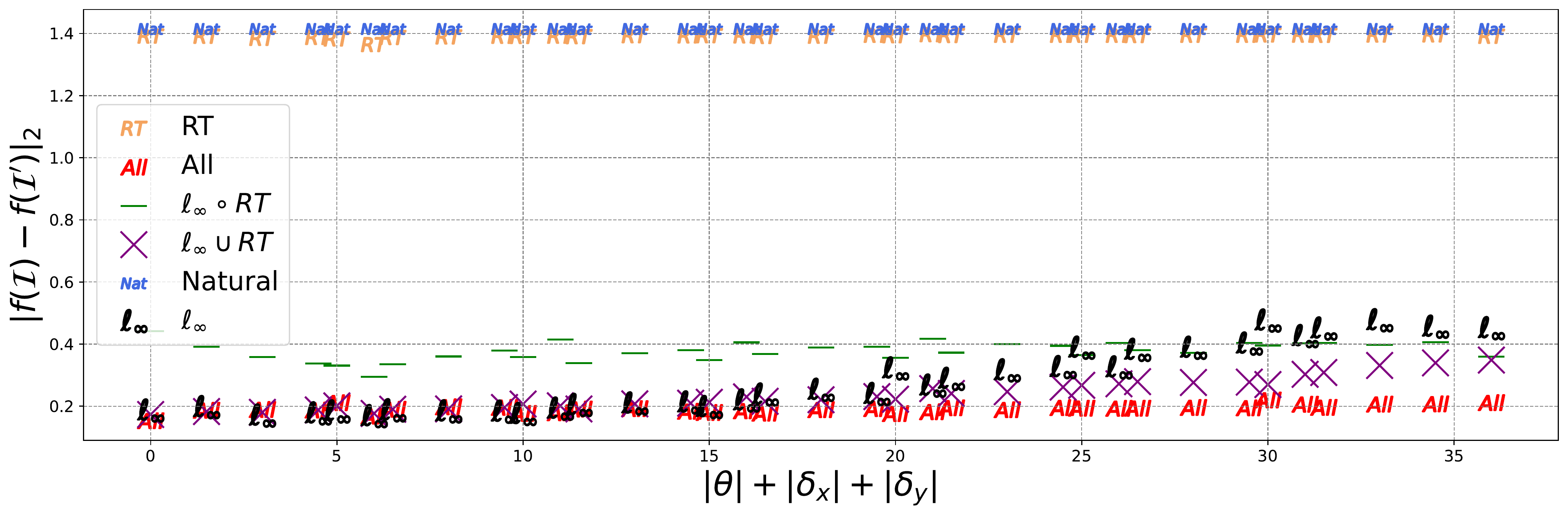}
    \caption{\textbf{Same as Figure~\ref{fig:logits-linf-rt-no-nat} but including RT and natural.} }
    \label{fig:logits-linf-rt}
    \vspace{-10pt}
\end{figure}

\clearpage
\section{The \texorpdfstring{$\ell_p$}{composition} setting is insufficient}
\label{sec:app-setting}

\begin{figure}[ht]
    \centering
    \includegraphics[width=\linewidth]{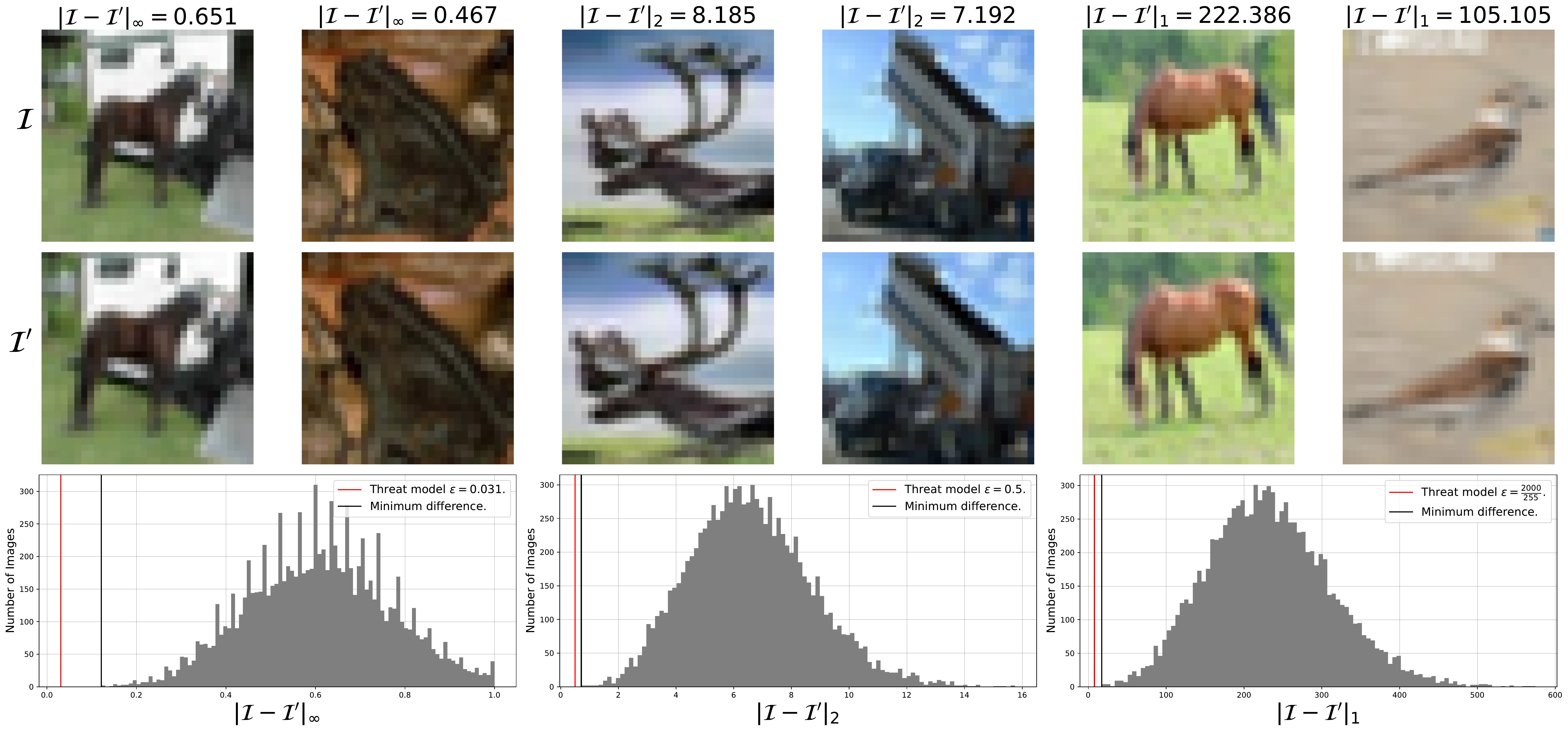}
    \caption{\textbf{$\ell_\infty$ norm threat models do not include other transformations of interest.} The first two rows show pairs of clean images ($\Ic$) from the CIFAR-10 test set and these same images scaled, rotated, and translated using bilinear interpolation ($\Ic'$). The three plots show the frequency distribution of $\|\Ic-\Ic'\|_p$ for $p \in \{1,2,\infty\}$ over the entire CIFAR-10 test set.}
    \label{fig:spatial-lp}
\end{figure}
Any classifier deemed robust should not be any more vulnerable to the affine transformed images seen in Figure~\ref{fig:spatial-lp}, simply because these are ill-defined in the $\ell_p$-threat model. To further demonstrate this, we plot affine transformed images for the entire CIFAR-10 test set, showing that no pair of natural and affine transformed images is considered valid under the $\ell_p$ threat models, despite being very perceptually similar to a human and certainly semantically equivalent. This highlights the need to consider threat models which go beyond the $\ell_p$ setting.

\clearpage
\section{Proof of Theorem \ref{theorem: main}}
\label{sec:app-theo}

\newcommand{\vxh}{\hat{\vx}}
\newcommand{\be}{\boldsymbol{\epsilon}_{\hat{f}_u}}
\newcommand{\efu}{\epsilon_{\hat{f}_u}}
\newcommand{\bef}{\boldsymbol{\epsilon}_{\hat{f}}}
\newcommand{\ef}{\epsilon_{\hat{f}}}

\begin{align}
\hspace{-4px}Y \stackrel{\text{u.a.r.}}{\sim} \{-1, 1\}, \; X_0 | Y = y := \begin{cases}
			y, & \text{w.p. } p;\\
            -y, & \text{w.p. } 1-p,
		 \end{cases} \; X_t | Y = y \sim \mathcal{N}\left(y\eta, 1\right),\;\; 1 \leq t\leq d-1. 
\end{align}

\begin{theorem}[Theorem \ref{theorem: main} Restated]
Given data distribution $\mathcal{D}$ where $p \geq \frac{1}{2}$ , $\eta \geq \frac{1}{\sqrt{d}}$, and $d \geq 24$, no linear classifier $f:\Rb^{d} \rightarrow \{ -1, 1 \}$, where $f(\vx) = \sign(\vw^T \vx)$, can obtain robust accuracy $> 0.5$ under the $\ell_{\infty}\circ \text{RT}$ threat model with $\ell_{\infty}$ budget $\epsilon=2\eta$ and RT budget $N = \frac{d}{8}$.
\end{theorem}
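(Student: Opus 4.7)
My plan is to exhibit a single explicit compositional attack and argue it fools any linear classifier with probability at least $1/2$ conditional on $Y = +1$. By the sign symmetry $(Y, \vx) \mapsto (-Y, -\vx)$ of $\mathcal{D}$ (combined with the mirrored $\boldsymbol{\Delta}$), the same lower bound then transfers to $Y = -1$, giving robust accuracy $\leq 1/2$ unconditionally. Exchangeability of $X_1, \ldots, X_{d-1}$ lets me take the RT-reachable set to be $S = \{1, \ldots, N\}$ without loss of generality. The attack I would use is: first apply the RT swap $X_0 \leftrightarrow X_{t^\dagger}$, where $t^\dagger := \arg\min_{t \in \{0\} \cup S} w_t$ (with $t^\dagger = 0$ meaning ``no swap''); then apply the coordinate-wise $\ell_\infty$ perturbation $\boldsymbol{\Delta} = -2\eta \,\sign(\vw)$, which deterministically subtracts $2\eta \|\vw\|_1$ from $\vw^T \vx$.

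Writing $B := w_{t^\dagger}$, $W := \sum_s w_s$, and $A := \eta(2\|\vw\|_1 - W)$, a short bookkeeping shows the attacked logit conditional on $Y=+1$ is $\tilde V = B X_0 + G - 2\eta \|\vw\|_1$, where $G$ is a Gaussian sum with mean $\eta(W - B)$ and variance $\sigma_G^2 = \|\vw\|_2^2 - B^2$. Conditioning on $X_0 \in \{\pm 1\}$ yields a two-component Gaussian tail mixture
\[
\Pr[\tilde V < 0 \mid Y=+1] \;=\; p\,\Phi(x_1) + (1-p)\,\Phi(x_2),
\]
with $x_1 = [A - B(1-\eta)]/\sigma_G$ and $x_2 = [A + B(1+\eta)]/\sigma_G$. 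Two structural facts underlie the rest: $A \geq \eta \|\vw\|_1 > 0$ (since $W \leq \|\vw\|_1$), and $x_1 + x_2 = 2\eta[2\|\vw\|_1 - (W-B)]/\sigma_G \geq 0$ (since $W - B \leq \|\vw\|_1$).

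The argument then splits on the sign of $B$. If $B > 0$, every $w_t$ with $t \in \{0\} \cup S$ is positive, so pigeonhole gives $B \leq \|\vw\|_1/(N+1)$; combining this with $A \geq \eta \|\vw\|_1$ and the arithmetic inequality $\eta(N+2) \geq 1$ (which follows from $\eta \geq 1/\sqrt{d}$ and $N = d/8$ because $u \mapsto u/8 + 2/u$ is minimized to $1$ at $u = \sqrt{d} = 4$) forces $A \geq B(1-\eta)$, i.e.\ $x_1 \geq 0$; then both $\Phi(x_1), \Phi(x_2) \geq 1/2$ and the mixture is $\geq 1/2$. If $B \leq 0$, then $x_1 \geq x_2$ (so $\Phi(x_1) \geq \Phi(x_2)$) and $x_1 \geq -x_2$ (from $x_1 + x_2 \geq 0$), so $\Phi(x_1) \geq 1 - \Phi(x_2)$ and therefore $\Phi(x_1) + \Phi(x_2) \geq 1$; the bias $p \geq 1/2$ then lower-bounds the mixture by the symmetric average $\tfrac{1}{2}(\Phi(x_1) + \Phi(x_2)) \geq 1/2$.

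The hardest step, I expect, will be the $B \leq 0$ branch, where $\Phi(x_2)$ can genuinely sit below $1/2$ and the $\geq 1/2$ conclusion requires simultaneously invoking the CDF-symmetry consequence of $x_1 + x_2 \geq 0$ and the mixture-weight bias $p \geq 1/2$. The tight arithmetic constraint $\eta(N+2) \geq 1$ is exactly what knits the hypotheses $\eta \geq 1/\sqrt{d}$, $N = d/8$, and $d \geq 24$ together in the $B > 0$ branch; any relaxation of these constants would break the pigeonhole step.
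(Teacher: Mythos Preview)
Your proof is correct and takes a genuinely different route from the paper's. The paper proceeds by contradiction: it assumes an optimal compositionally robust linear classifier exists, then proves three structural lemmas showing that such a classifier must have uniform positive weights on the RT-reachable positions and zero weight elsewhere, and finally computes the robust accuracy of that specific uniform classifier explicitly (expressing it via $\Phi$ and checking the argument is $\leq 0$ when $d\geq 24$). Your argument instead works directly for an \emph{arbitrary} weight vector $\vw$: you exhibit a single oblivious attack (swap $X_0$ to the minimum-weight reachable coordinate, then apply $-2\eta\,\sign(\vw)$) and show via a clean two-case analysis on $\sign(B)$ that the conditional misclassification probability is already $\geq 1/2$. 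This buys you a shorter, more constructive proof that avoids the optimality characterization entirely; the $B\leq 0$ branch in particular is handled by a neat combination of $x_1+x_2\geq 0$ (CDF symmetry) and the mixture bias $p\geq 1/2$, which the paper never needs because its optimal classifier has all positive weights. Conversely, the paper's approach yields more structural insight into what the ``best'' linear defense looks like.

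Two small remarks. First, your formulas for $x_1,x_2$ presuppose $\sigma_G>0$; the degenerate case $\sigma_G=0$ (which forces $\vw$ to be supported on a single coordinate $t^\dagger$ with $w_{t^\dagger}\leq 0$) should be dispatched separately, though it is immediate. Second, your remark that ``$d\geq 24$'' is one of the hypotheses knitted together by $\eta(N+2)\geq 1$ is slightly misleading: as you yourself observe, $u/8+2/u\geq 1$ holds for \emph{all} $u>0$ by AM--GM, so your argument actually goes through for every $d$ with $N=d/8\geq 1$. The tightness is in the constants $1/\sqrt{d}$ and $d/8$, not in the threshold $24$ (which the paper uses only because its final monotonicity check of $g(d)=(d/8+3)/\sqrt{d}$ is taken from $d=24$ onward).
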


\begin{proof}

Suppose for a contradiction that there exists a classifier $\hat{f}:\Rb^{d} \rightarrow \{ -1, 1 \}$ that can obtain $\ell_{\infty}\circ RT$ robustness $> 0.5$. Moreover, we suppose w.l.o.g. that $\hat{f}$ is the optimal compositionally robust linear classifier. That is, for all linear classifiers $\tilde{f}$, $\min_{\boldsymbol{\epsilon}, \text{RT}} \text{Pr}[\tilde{f}(\text{RT}(\vx) + \boldsymbol{\epsilon}) = y] \leq \min_{\boldsymbol{\epsilon}, \text{RT}} \text{Pr}[\hat{f}(\text{RT}(\vx) + \boldsymbol{\epsilon}) = y]$. 

Let $\hat{f}(\vx) = \sign(\vw_{\hat{f}}^T \vx)$. We first prove a lemma that shows some structure on $\vw$. The key insight is that any linear classifier that is non-trivially robust to a compositional adversary must rely on the strongly-correlated feature $x_0$. Moreover, placing excessive weight on some weakly-correlated feature $x_i$ can only increase the probability of a misclassification under a compositional adversary, as an $\ell_{\infty}$ perturbation $-\epsilon y \sign(w_i)$ can flip the weakly correlated feature's sign so it becomes weakly correlated with $-y$. We define $\mathcal{R} \subseteq [d]$ by the set of indices of the features that the strongly-correlated feature can move to via an RT transformation. Here, $|\mathcal{R}| = N$.

\begin{lemma}
\label{lemma:1}
$\hat{f}$ must have 0 weight over all features $x_i$, where $i \in [d] \setminus \mathcal{R}$.
\end{lemma}

\begin{proof}
     Suppose that $\hat{f}$ contains a non-zero weight $w_{\hat{f}, i}$, where $i \in [d] \setminus \mathcal{R}$. Now, consider the modified classifier $\hat{f}_0$, where we set $w_{\hat{f}_0, i} := 0$ and leave all other weights unchanged. Since the strongly correlated feature cannot move to index $i$ via any RT transformation, then applying any RT transformation will leave the distribution of $x_i$ unchanged. Since the difference between the classification of $\hat{f}$ and $\hat{f}_0$ depends solely on $x_i$, it suffices to compare the probability of correct classification between $\hat{f}$ and $\hat{f}_0$ under an $\ell_{\infty}$ adversary.
    
    Let $p_{\hat{f}} := \min_{\boldsymbol{\epsilon}} \text{Pr}[\hat{f}(\vx + \boldsymbol{\epsilon}) = y]$, and define $p_{\hat{f}_0}$ similarly. Moreover, let $\boldsymbol{\epsilon}_{\hat{f}} := \argmin_{\boldsymbol{\epsilon}} \text{Pr}[\hat{f}(\vx + \boldsymbol{\epsilon}) = y]$, and define $\boldsymbol{\epsilon}_{\hat{f}_0}$ similarly. Now, since the classification decision of $\hat{f}_0$ does not depend on $x_i$, then by applying a perturbation $\hat{\epsilon} := -\epsilon y \sign(w_{\hat{f}, i})$ to $x_i$, where $\epsilon = 2 \eta$, we have that:
    \begin{align*}
        p_{\hat{f}_0}
        &= \text{Pr}[\hat{f}_0(\vx + \boldsymbol{\epsilon}_{\hat{f}_0}) = y]
        \\&= \text{Pr}[y\sign(\vw_{\hat{f}_0}^T (\vx + \boldsymbol{\epsilon}_{\hat{f}_0})) > 0]
        \\&= \text{Pr}\left[y\sign(\sum_{j \neq i} w^T_{\hat{f}_0, j} (x_j + \epsilon_{\hat{f}_0, j}))  > 0\right]
        \\&> \text{Pr}\left[y\sign(\sum_{j \neq i} w^T_{\hat{f}_0, j} (x_j + \epsilon_{\hat{f}_0, j}) + w_{\hat{f}, i}(x_i + \hat{\epsilon})) > 0\right]\\&\qquad\text{(since $\text{Pr}[ |w_{\hat{f}, i}| \mathcal{N}(-\eta, 1) > 0] < 0.5$.)}
        \\&\geq \text{Pr}[y\sign(\vw_{\hat{f}}^T (\vx + \boldsymbol{\epsilon}_{\hat{f}})) > 0]\\&\qquad\text{(by the optimality of $\boldsymbol{\epsilon}_{\hat{f}}$.)}
        \\&= p_{\hat{f}},
    \end{align*}
    where the strict inequality follows since $x_i\sim\mathcal{N}(\eta y, 1)$ and applying perturbation $\hat{\epsilon} = -\epsilon y \sign(w_{\hat{f}, i})$, where $\epsilon = 2 \eta$, will change the distribution of $x_i$ so that $x_i\sim\mathcal{N}(-\eta, 1)$. However, this contradicts the optimality of $\hat{f}$, completing the proof.

\end{proof}

\begin{lemma}
\label{lemma:2}
$\hat{f}$ does not contain any zero weights over the features $x_i$, where $i \in \mathcal{R}$.
\end{lemma}

\begin{proof}
Suppose that $w_{\hat{f}, i} = 0$ for some $i \in \mathcal{R}$. Note that by Lemma \ref{lemma:1} we know that $w_{\hat{f}, j} = 0$ for all $j \in [d] \setminus \mathcal{R}$. Therefore, we know that there must exist some non-zero weight $w_{\hat{f}, n}$ for $n \in \mathcal{R}$, as otherwise the classifier $\hat{f}$ would reduce to the $0$ classifier, which attains accuracy $0.5$. Now, we define a compositional adversary $\hat{\text{RT}}$ that first swaps $x_0$ with $x_i$, and then applies an $\ell_{\infty}$ perturbation $\boldsymbol{\hat{\epsilon}}$ defined by $\hat{\epsilon}_l := -\epsilon y \sign(w_{\hat{f}, l})$ to all features $x_l$, $l \in [d]$. Note that by swapping $x_0$ with $x_i$, the strongly correlated feature is zeroed out. Moreover, the remaining weakly correlated features are correlated with $-y$ by the $\ell_{\infty}$ adversarial perturbation. Letting $\mathcal{P}$ denote the set of indices whose weights of $\hat{f}$ are nonzero, we can see that for any $\vx$:
\begin{align*}
    \text{Pr}\Big[\hat{f}(\hat{\text{RT}}(\vx) + \hat{\boldsymbol{\epsilon}}) = y\Big]
    &= \text{Pr}\Big[y\sign(\vw_{\hat{f}}^T (\hat{\text{RT}}(\vx) + \boldsymbol{\hat{\epsilon}})) > 0\Big]
    \\&= \text{Pr}\Big[y \sum_{i \in \mathcal{P}} |w_{\hat{f}, i}|\mathcal{N}(- \eta y, 1) > 0\Big]
    \\&= \text{Pr}\Big[\sum_{i \in \mathcal{P}} |w_{\hat{f}, i}|\mathcal{N}(- \eta, 1) > 0\Big]
    \\&< 0.5, \quad\text{(since $-\eta < 0$.)}
\end{align*}
contradicting that $\hat{f}$ attains nontrivial robustness against a composite adversary, which completes the proof.
\end{proof}

\begin{lemma}
\label{lemma:3}
$\hat{f}$ must have uniform weights over the features $x_i$, where $i \in \mathcal{R}$.
\end{lemma}

\begin{proof}
Note that this is trivially true if $|\Rc|=1$. Let us consider the case where $|\Rc| \geq 2$. Suppose that $\exists\;\; w_{\hat{f},i} \neq w_{\hat{f},j}$, for some $i,j \in \Rc, i \neq j$. By Lemma \ref{lemma:2}, we know that for all $k \in \mathcal{R}$, $w_{\hat{f}, k} \neq 0$. We assume w.l.o.g. that $|w_{\hat{f}, i}| \leq |w_{\hat{f}, k}| \; \forall k \in \Rc $.  Moreover, by Lemma \ref{lemma:1} we know that $w_{\hat{f}, l} = 0$ for all $l \in [d] \setminus \mathcal{R}$. Consider the modified classifier $\hat{f}_{u}$, where we set $w_{\hat{f}_u, k} := w_{\hat{f}, i}$ for all $k \in \mathcal{R}$. Note that since the weights in $\hat{f}_{u}$ over the features in $\mathcal{R}$ are uniform, then an RT adversary has no power under this classifier. We will show that a composite adversary can always exploit $\hat{f}$ to a greater extent than $\hat{f}_u$ by placing the strongly correlated feature at position $i$ and applying an $\ell_\infty$ perturbation to flip the distribution of features $k \in \Rc \setminus \{ i \}$.

Let $\vxh$ denote the RT perturbed vector where feature $x_0$ has been swapped with $x_i$. Let $p_{\hat{f}} := \min_{\boldsymbol{\epsilon}, \text{RT}} \text{Pr}[\hat{f}(\text{RT}(\vx) + \boldsymbol{\epsilon}) = y]$, and define $p_{\hat{f}_u}$ similarly. Moreover, let $\boldsymbol{\epsilon}_{\hat{f}}, \text{RT}_{\hat{f}} := \argmin_{\boldsymbol{\epsilon}, \text{RT}} \text{Pr}[\hat{f}(\text{RT}(\vx) + \boldsymbol{\epsilon}) = y]$, and define $\boldsymbol{\epsilon}_{\hat{f}_u}, \text{RT}_{\hat{f}_u}$ similarly. Lastly, define $\hat{\boldsymbol{\epsilon}}$ by $\hat{\epsilon}_l := -\epsilon y \sign(w_{\hat{f}, i})$, for all $l \in [d]$, and define $\tilde{\boldsymbol{\epsilon}}$ by $\tilde{\epsilon}_l := -\epsilon y \sign(w_{\hat{f}, l})$, for all $l \in [d]$.

\begingroup
\allowdisplaybreaks
\begin{align*}
    p_{\hat{f}_u}
    &= \text{Pr}[\hat{f}_u(\text{RT}_{\hat{f}_u}(\vx) + \be) = y]
    \\&= \text{Pr}[\hat{f}_u(\vx + \be) = y] \quad\text{(since the weights over features in $\mathcal{R}$ are uniform.)}
    \\&= \text{Pr}\left[y\sign\left(\vw_{\hat{f}_u}^T (\vx + \be)\right) > 0\right]
    \\&= \text{Pr}\left[y\sign\left(\sum_{j \in \mathcal{R}} w_{\hat{f},i} (x_j + \epsilon_{\hat{f}_u, j}) \right) > 0\right]
    \\&= \text{Pr}\left[y\sign\left(\sum_{j \in \mathcal{R}} w_{\hat{f},i} (x_j + \hat{\epsilon}_j) \right) > 0\right] \quad\text{(as $\be = \hat{\boldsymbol{\epsilon}}$ is optimal.)}
    \\&= \text{Pr}\left[y\sign\left(\sum_{j \in \mathcal{R} \setminus \{0\}} w_{\hat{f},i} (x_j -\hat{\epsilon}_j y \sign(w_{\hat{f}, i})) + w_{\hat{f},i} (x_0 -\hat{\epsilon}_i y \sign(w_{\hat{f}, i})) \right) > 0\right]
    \\&= \text{Pr}\left[y\sign\left(\sum_{j \in \mathcal{R} \setminus \{i\}} w_{\hat{f},i} (\hat{x}_j -\hat{\epsilon}_j y \sign(w_{\hat{f}, i})) + w_{\hat{f},i} (\hat{x}_i -\hat{\epsilon}_i y \sign(w_{\hat{f}, i})) \right) > 0\right] \\&\qquad\text{(as $x_0 = \hat{x}_i$, $x_i = \hat{x}_0$, and $x_j = \hat{x}_j$ for $j \in \mathcal{R} \setminus \{ 0, i \}$.)}
    \\&> \text{Pr}\left[y\sign\left(\sum_{j \in \mathcal{R} \setminus \{i\}} w_{\hat{f},j} (\hat{x}_j -\tilde{\epsilon}_j y \sign(w_{\hat{f}, j})) + w_{\hat{f},i} (\hat{x}_i -\tilde{\epsilon}_i y \sign(w_{\hat{f}, i})) \right) > 0\right] \\&\qquad\text{(as $\text{Pr}\Big[\mathcal{N}(-\eta, 1) > 0\Big] < 0.5$ and $|w_{\hat{f},i}| \leq |w_{\hat{f},j}|,\ \forall j$.)}
    \\&= \text{Pr}\Big[\hat{f}(\hat{\vx} + \tilde{\boldsymbol{\epsilon}}) = y\Big]
    \\&\geq \text{Pr}\Big[\hat{f}(\text{RT}_{\hat{f}}(\vx) + \boldsymbol{\epsilon}_{\hat{f}}) = y\Big] \quad\text{(by the optimality of $\boldsymbol{\epsilon}_{\hat{f}}$ for $\text{RT}_{\hat{f}}$.)} 
    \\&= p_{\hat{f}}.
\end{align*}
\endgroup
    
However, this contradicts the optimality of $\hat{f}$, completing the proof.
\end{proof}

Lemma \ref{lemma:1}, \ref{lemma:2}, and \ref{lemma:3} suggest the following structure on $\vw_{\hat{f}}$: 
\begin{align*}
    w_{\hat{f},i}=\begin{cases}c, & \text{if }\; i \in \Rc;\\ 0,&\text{otherwise,}\end{cases}
\end{align*} 
for some $c > 0$. Note that trivially $c \not< 0$, as otherwise the classification decision would be correlated with $-y$, which would attain natural accuracy $< 0.5$. Thus, $\hat{f}(\vx) = \sign(\vw_{\hat{f}}^T \vx) = c\sign(\sum_{i \in \Rc} x_i)$ for some $c >0$.

This suggested structure of the classifier reveals an interesting insight on the relationship between the strongly and weakly correlated features. Namely, as $N = |\mathcal{R}|$ grows, the effect of the strongly correlated feature will be increasingly diluted by the cumulative effect of the weakly correlated features with indices in $\mathcal{R}$, since the weights over the features with indices in $\mathcal{R}$ are uniform. Since $\hat{f}$'s weights are uniform over $\Rc$, its classification decision will be invariant to the position of the strongly correlated feature. Therefore, it is sufficient to consider its robustness to a worst case $\ell_\infty$ perturbation. For simplicity, we will assume that $\Rc = [|\Rc|]$ i.e., that the first $|\Rc|$ features are RT reachable and that the strongly correlated feature is at $x_0$. Let $\boldsymbol{\epsilon}_{\hat{f}} := \argmin_{\boldsymbol{\epsilon}} \text{Pr}[\hat{f}(\vx + \boldsymbol{\epsilon}) = y]$. We note that applying $\epsilon_{\hat{f},0}=-\epsilon y\sign(w_{\hat{f},0})=-2\eta y\sign(w_{\hat{f},0})$ to the strongly correlated feature will bring the example closest to the decision boundary.

\begingroup
\allowdisplaybreaks
\begin{align}
    \text{Pr} [\hat{f}(\vx + \bef) = y]=& \text{Pr} [y \vw_{\hat{f}}^T(\vx + \bef) > 0] 
    \nonumber\\=& \text{Pr} \left[yc \sum_{i = 0}^{|\Rc|-1} x_i+\epsilon_{\hat{f},i} > 0\right] 
    \nonumber\\=& \text{Pr} \left[yc \left(\sum_{i = 1}^{|\Rc|-1} x_i+\epsilon_{\hat{f},i}\right) + yc(x_0+\epsilon_{\hat{f},0}) > 0\right]
     \nonumber\\=& p\cdot \text{Pr} \left[yc \left(\sum_{i = 1}^{|\Rc|-1} \Nc(\eta y,1)+\epsilon_{\hat{f},i}\right) + yc(y+\epsilon_{\hat{f},0})> 0\right] 
     \nonumber\\ &+ (1-p)\cdot \text{Pr} \left[yc \left(\sum_{i = 1}^{|\Rc|-1} \Nc(\eta y,1)+\epsilon_{\hat{f},i}\right) + yc(-y+\epsilon_{\hat{f},0}) > 0\right]
     \nonumber\\=& p\cdot \text{Pr} \left[yc \left(\sum_{i = 1}^{|\Rc|-1} \Nc(-\eta y,1)\right) + c+yc\epsilon_{\hat{f},0} > 0\right] 
     \nonumber\\ &+ (1-p)\cdot \text{Pr} \left[yc \left(\sum_{i = 1}^{|\Rc|-1} \Nc(-\eta y,1)\right)-c+yc\epsilon_{\hat{f},0} > 0\right]
     \nonumber\\=& p\cdot \text{Pr} \left[yc  \Nc(-\eta y(|\Rc|-1),(|\Rc|-1)) + c+yc\epsilon_{\hat{f},0} > 0\right] 
     \nonumber\\ &+ (1-p)\cdot \text{Pr} \left[yc  \Nc(-\eta y(|\Rc|-1),(|\Rc|-1)) - c+yc\epsilon_{\hat{f},0} > 0\right]
     \nonumber\\=& p\cdot \text{Pr} \left[\Nc\left(-c\eta (|\Rc|-1),c^2(|\Rc|-1)\right)+ c+yc\epsilon_{\hat{f},0} > 0\right] 
     \nonumber\\ &+ (1-p)\cdot \text{Pr} \left[ \Nc\left(-c\eta (|\Rc|-1),c^2(|\Rc|-1)\right) - c+yc\epsilon_{\hat{f},0} > 0\right]
     \nonumber\\=& p\cdot \text{Pr} \left[\Nc\left(-c\eta (|\Rc|-1)+ c+yc\epsilon_{\hat{f},0},c^2(|\Rc|-1)\right) > 0\right] 
     \nonumber\\ &+ (1-p)\cdot \text{Pr} \left[ \Nc\left(-c\eta (|\Rc|-1) - c+yc\epsilon_{\hat{f},0},c^2(|\Rc|-1)\right) > 0\right]
     \nonumber\\=& p\cdot \text{Pr} \left[\Nc\left(-c\eta (|\Rc|-1)+ c+yc(-2\eta y),c^2(|\Rc|-1)\right) > 0\right] 
     \nonumber\\ &+ (1-p)\cdot \text{Pr} \left[ \Nc\left(-c\eta (|\Rc|-1) - c+yc(-2\eta y),c^2(|\Rc|-1)\right) > 0\right]
     \nonumber\\=& p\cdot \text{Pr} \left[\Nc\left(c(-\eta|\Rc|+\eta+1-2\eta),c^2(|\Rc|-1)\right) > 0\right] 
     \nonumber\\ &+ (1-p)\cdot \text{Pr} \left[ \Nc\left(c(-\eta|\Rc|+\eta-1-2\eta),c^2(|\Rc|-1)\right) > 0\right].\label{eq:proba}
\end{align}
\endgroup

To upper bound the probability of a correct classification (to complete the proof by contradiction) using an interpretable quantity, we will express the two probabilities above in terms of the standard normal CDF, $\Phi(\cdot)$. The scalars $\alpha_1, \alpha_{-1} \in \Rb$ in equations \ref{eq:a} and \ref{eq:b} represent distances from 0 in units of standard deviation $\sigma=c\sqrt{(|\Rc|-1)}$ to the means of their respective normal distributions in \eqref{eq:proba}.

\begin{align}
c(-\eta|\Rc|+\eta+1-2\eta)-\alpha_1\cdot c\sqrt{(|\Rc|-1)} = 0\label{eq:a},\\
c(-\eta|\Rc|+\eta-1-2\eta)-\alpha_{-1}\cdot c\sqrt{(|\Rc|-1)} = 0\label{eq:b}.
\end{align}
Solving for $\alpha_1$ in \eqref{eq:a},
\begin{align*}
&\qquad c(-\eta|\Rc|-\eta+1-2\eta)-\alpha_1\cdot c\sqrt{(|\Rc|-1)}= 0\Longleftrightarrow \alpha_1 = \frac{\eta(-|\Rc|-3)+1}{\sqrt{(|\Rc|-1)}}.\\
\end{align*}

Similarly we obtain $\alpha_{-1} = \frac{\eta(-|\Rc|-3)-1}{\sqrt{(|\Rc|-1)}}$. By symmetry of the normal distribution, we can rewrite $\text{Pr}[f(\vx) = y]= \text{Pr}\left[y \vw^T\vx > 0 \right]$ as follows:
\begin{align}
    \text{Pr}\left[y \vw^T\vx > 0 \right]
    &= p\cdot\Phi(\alpha_1)+(1-p)\cdot\Phi(\alpha_{-1})
    \nonumber\\&= p\cdot\Phi\left(\frac{\eta(-|\Rc|-3)+1}{\sqrt{(|\Rc|-1)}}\right)+(1-p)\cdot\Phi\left(\frac{\eta(-|\Rc|-3)-1}{\sqrt{(|\Rc|-1)}}\right).\label{eq:finalform}
\end{align}
Note that the inputs to the cumulative distribution function decrease as $\eta$ grows. Therefore, if suffices to upper-bound the probability of correct classification with $\eta=\frac{1}{\sqrt{d}}$. Plugging in $\eta = \frac{1}{\sqrt{d}}$, we obtain:  
\begin{align*}
    \text{Pr}\left[y \vw^T\vx > 0 \right]= p\Phi\left(\frac{-|\Rc|-3}{\sqrt{d(|\Rc|-1)}}+\frac{1}{\sqrt{(|\Rc|-1)}}\right)+(1-p)\Phi\left(\frac{-|\Rc|-3}{\sqrt{d(|\Rc|-1)}}-\frac{1}{\sqrt{(|\Rc|-1)}}\right).
\end{align*}
This expression depends on $|\Rc|$ and $d$. But recall that we have assumed that the size of $N = |\Rc| = \frac{d}{8}$. Therefore, we have that:
\begin{align*}
    \text{Pr}\left[y \vw^T\vx > 0 \right]= p\cdot\Phi\left(\frac{-\frac{d}{8}-3}{\sqrt{d(\frac{d}{8}-1)}}+\frac{1}{\sqrt{(\frac{d}{8}-1)}}\right)+(1-p)\cdot\Phi\left(\frac{-\frac{d}{8}-3}{\sqrt{d(\frac{d}{8}-1)}}-\frac{1}{\sqrt{(\frac{d}{8}-1)}}\right).
\end{align*}

To complete our proof by contradiction, it suffices to show that the quantity $\frac{-\frac{d}{8}-3}{\sqrt{d(\frac{d}{8}-1)}}+\frac{1}{\sqrt{(\frac{d}{8}-1)}} \leq 0$ for $d \geq 24$, since  $\frac{-\frac{d}{8}-3}{\sqrt{d(\frac{d}{8}-1)}}-\frac{1}{\sqrt{(\frac{d}{8}-1)}} \leq \frac{-\frac{d}{8}-3}{\sqrt{d(\frac{d}{8}-1)}}+\frac{1}{\sqrt{(\frac{d}{8}-1)}}$, $p \cdot \Phi(0) + (1-p) \cdot \Phi(0) = \Phi(0) = 0.5$, and $\Phi(x)$ is a monotonically increasing function of $x$.

First, we observe that:
\begin{align*}
    \frac{-\frac{d}{8}-3}{\sqrt{d(\frac{d}{8}-1)}}+\frac{1}{\sqrt{(\frac{d}{8}-1)}}
    &= \left(\frac{-(\frac{d}{8} + 3)}{\sqrt{d}} + 1\right)\frac{1}{\sqrt{(\frac{d}{8}-1)}}.
\end{align*}
Thus, it suffices to show that $\frac{(\frac{d}{8} + 3)}{\sqrt{d}} \geq 1$ for all $d \geq 24$. First, observe that this is the case when $d = 24$:
\begin{align*}
    \frac{\frac{24}{8} + 3}{\sqrt{24}}
    = \frac{6}{\sqrt{24}}
    \geq 1.
\end{align*}

Further, when $d \geq 24$, we can see that $\frac{(\frac{d}{8} + 3)}{\sqrt{d}}$ is a monotonically increasing function of $d$ by treating $d$ as a continuous variable and looking at the first derivative of $g(d) = \frac{(\frac{d}{8} + 3)}{\sqrt{d}}$:
\begin{align*}
    g'(d)
    = \frac{1}{2}\frac{d^{\frac{-1}{2}}}{8} - \frac{3}{2}d^{\frac{-3}{2}}
    = \frac{d - 24}{16\sqrt{d^3}},
\end{align*}
and thus $g'(d) \geq 0$ exactly when $d \geq 24$, as desired. Therefore, $\frac{(\frac{d}{8} + 3)}{\sqrt{d}} \geq 1$ for all $d \geq 24$. Therefore, the classification accuracy of $\hat{f}$ is upper bounded by 0.5 under the compositional adversary, which contradicts our assumption that $\hat{f}$ achieves nontrivial compositional robustness. This concludes the proof.
\end{proof}

\end{document}